\numberwithin{equation}{section}  
\newcommand*{\defeq}{\coloneqq}  
\newcommand*{\rdefeq}{\eqqcolon}  
\providecommand{\where}{}
\DeclarePairedDelimiterX{\set}[1]\{\}{%
\renewcommand*{\where}{\colon}
\newcommand*{\setsym}[1]{\ensuremath{{\mathbb{#1}}}}
\newcommand*{\sA}{\setsym{A}}
\newcommand*{\sD}{\setsym{D}}
\newcommand*{\sF}{\setsym{F}}
\newcommand*{\sW}{\setsym{W}}
\newcommand*{\sX}{\setsym{X}}
\newcommand*{\sY}{\setsym{Y}}
\newcommand*{\N}{\setsym{N}}
\newcommand*{\R}{\setsym{R}}
\newcommand*{\maps}[2]{#2^{#1}}  
\newcommand*{\im}[1]{\operatorname{im} \left( #1 \right)}  
\newcommand*{\inv}{^{-1}}
\newcommand*{\spacesym}[1]{{\mathbb{#1}}}  
\newcommand*{\closure}[1]{\operatorname{cl} \left( #1 \right)}
\newcommand*{\ball}[2]{B_{#1}(#2)}
\newcommand*{\clball}[2]{\bar{B}_{#1}(#2)}
\renewcommand*{\vec}[1]{{\bm{#1}}}
\newcommand*{\vc}{\vec{c}}
\newcommand*{\vf}{\vec{f}}
\newcommand*{\vh}{\vec{h}}
\newcommand*{\vv}{\vec{v}}
\newcommand*{\vw}{\vec{w}}
\newcommand*{\vx}{\vec{x}}
\newcommand*{\vy}{\vec{y}}
\newcommand*{\vmu}{\vec{\mu}}
\NewDocumentCommand{\range}{s m}{%
  \operatorname{ran}
  \IfBooleanTF{#1}{\left(}{(}
  #2
  \IfBooleanTF{#1}{\right)}{)}
}
\NewDocumentCommand{\kernel}{s m}{%
  \operatorname{ker}
  \IfBooleanTF{#1}{\left(}{(}
  #2
  \IfBooleanTF{#1}{\right)}{)}
}
\newcommand*{\mat}[1]{{\bm{#1}}}
\newcommand*{\mA}{\mat{A}}
\newcommand*{\mC}{\mat{C}}
\newcommand*{\mD}{\mat{D}}
\newcommand*{\mH}{\mat{H}}
\newcommand*{\mI}{\mat{I}}
\newcommand*{\mJ}{\mat{J}}
\newcommand*{\mL}{\mat{L}}
\newcommand*{\mM}{\mat{M}}
\newcommand*{\mP}{\mat{P}}
\newcommand*{\mS}{\mat{S}}
\newcommand*{\mU}{\mat{U}}
\newcommand*{\mV}{\mat{V}}
\newcommand*{\mX}{\mat{X}}
\newcommand*{\mY}{\mat{Y}}
\newcommand*{\mLambda}{\mat{\Lambda}}
\newcommand*{\mSigma}{\mat{\Sigma}}
\newcommand*{\banachsp}[1][B]{\spacesym{#1}}  
\DeclarePairedDelimiterXPP{\@norm}[2]{}{\lVert}{\rVert}{\ifstrempty{#2}{}{_{#2}}}{#1}
\NewDocumentCommand{\norm}{s O{} m O{}}{%
  \IfBooleanTF{#1}{%
    \@norm*{#3}{#4}%
  }{%
    \@norm[#2]{#3}{#4}%
  }%
}
\DeclarePairedDelimiter{\abs}{\lvert}{\rvert}
\newcommand*{\dualop}{'}
\newcommand*{\hilbertsp}[1][H]{\spacesym{#1}}  
\DeclarePairedDelimiterXPP{\@inprod}[3]{}{\langle}{\rangle}{\ifstrempty{#3}{}{_{#3}}}{#1, #2}
\NewDocumentCommand{\inprod}{s O{} m m O{}}{%
  \IfBooleanTF{#1}{%
    \@inprod*{#3}{#4}{#5}%
  }{%
    \@inprod[#2]{#3}{#4}{#5}%
  }%
}
\newcommand*{\T}{^\top}
\newcommand*{\pinv}{^\dagger}
\NewDocumentCommand{\cfns}{m o}{%
  C (#1\IfValueT{#2}{, #2})
}
\NewDocumentCommand{\cdfns}{O{0} m o}{%
  C^{#1} (#2\IfValueT{#3}{, #3})
}
\NewDocumentCommand{\holdersp}{m o m}{%
  C^{#1\IfValueT{#2}{, #2}}(\closure{#3})
}
\RenewDocumentCommand{\L}{m o}{%
  \ensuremath{
    L_{#1}
    \IfNoValueF{#2}{\left( #2 \right)}
  }
}
\NewDocumentCommand{\sobolev}{o m o}{%
  \IfNoValueTF{#1}{
    H^{#2}
  }{
    W^{#1,#2}
  }
  \IfNoValueF{#3}{\left( #3 \right)}
}
\NewDocumentCommand{\sobolevtest}{o m o}{%
  \sobolev[#1]{#2}_0
  \IfNoValueF{#3}{\left( #3 \right)}
}
\newcommand*{\rkhs}[1]{\hilbertsp_{#1}}
\newcommand*{\linop}[1]{\mathcal{#1}}
\newcommand*{\linfctls}[1]{{\bm{\linop{#1}}}}
\DeclarePairedDelimiter{\@evallinop@oparg}{[}{]}
\DeclarePairedDelimiter{\@evallinop@fnarg}{(}{)}
\NewDocumentCommand{\evallinop}{m s O{} m s O{} d()}{%
  #1%
  \IfBooleanTF{#2}{%
    \@evallinop@oparg*{#4}%
  }{%
    \@evallinop@oparg[#3]{#4}%
  }%
  \IfValueT{#7}{%
    \IfBooleanTF{#5}{%
      \@evallinop@fnarg*{#7}%
    }{%
      \@evallinop@fnarg[#6]{#7}%
    }%
  }%
}
\NewDocumentCommand{\linopat}{m s O{} m d()}{%
  \IfBooleanTF{#2}{%
    \evallinop{\linop{#1}}*{#4}(#5)%
  }{%
    \evallinop{\linop{#1}}[#3]{#4}(#5)%
  }%
}
\NewDocumentCommand{\linfctlsat}{m s O{} m d()}{%
  \IfBooleanTF{#2}{%
    \evallinop{\linfctls{#1}}*{#4}(#5)%
  }{%
    \evallinop{\linfctls{#1}}[#3]{#4}(#5)%
  }%
}
\newcommand*{\diff}[2][1]{%
  \mathrm{d} #2\ifstrequal{#1}{1}{}{^{#1}}%
}
\newcommand*{\pdiff}[2][1]{%
  \partial #2\ifstrequal{#1}{1}{}{^{#1}}%
}
\NewDocumentCommand{\deriv}{s O{1} m m}{%
  \frac{%
    \mathrm{d}\ifstrequal{#2}{1}{}{^{#2}}\IfBooleanT{#1}{#4}
  }{%
    \diff[#2]{#3}
  }
  \IfBooleanF{#1}{#4}
}
\NewDocumentCommand{\derivat}{s O{1} m m m}{%
  \left.
    \IfBooleanTF{#1}{%
      \deriv*[#2]{#3}{#4}
    }{%
      \deriv[#2]{#3}{#4}
    }
  \right|_{#3 = #5}
}
\NewDocumentCommand{\dderiv}{m m}{%
  \partial_{#1} #2
}
\NewDocumentCommand{\dderivat}{m m o m}{%
  \IfNoValueTF{#3}{%
    \dderiv{#1}{#2} \left( #4 \right)
  }{%
    \left.
    \dderiv{#1}{#2}
    \right_{#3 = #4}
  }
}
\NewDocumentCommand{\pderiv}{s O{1} m m}{%
  \frac{%
    \partial\ifstrequal{#2}{1}{}{^{#2}}\IfBooleanT{#1}{#4}
  }{%
    \pdiff[#2]{#3}
  }
  \IfBooleanF{#1}{#4}
}
\NewDocumentCommand{\pderivat}{s O{1} m m o m}{%
  \left.
    \IfBooleanTF{#1}{%
      \pderiv*[#2]{#3}{#4}
    }{%
      \pderiv[#2]{#3}{#4}
    }
  \right|_{\IfNoValueTF{#5}{#3}{#5} = #6}
}
\NewDocumentCommand{\mpderiv}{s O{1} m m}{%
  \frac{%
    \partial\ifstrequal{#2}{1}{}{^{#2}}\IfBooleanT{#1}{#4}
  }{%
    #3
  }
  \IfBooleanF{#1}{#4}
}
\NewDocumentCommand{\mpderivat}{s O{1} m m m m}{%
  \left.
    \IfBooleanTF{#1}{%
      \mpderiv*[#2]{#3}{#4}
    }{%
      \mpderiv[#2]{#3}{#4}
    }
  \right|_{#5 = #6}
}
\NewDocumentCommand{\mipderiv}{s m o m o}{%
  \IfNoValueTF{#3}{
    \mathrm{D}^{#2}
    \IfBooleanTF{#1}{\left[ #4 \right]}{#4}
    \IfNoValueF{#5}{\left( #5 \right)}
  }{
    \IfNoValueF{#5}{\left.}
    \frac{%
      \partial^{\lvert #2 \rvert}\IfBooleanT{#1}{#4}
    }{%
      \pdiff[#2]{#3}
    }
    \IfBooleanF{#1}{#4}
    \IfNoValueF{#5}{\right\vert_{#3 = #5}}
  }
}
\NewDocumentCommand{\jacobian}{o m o}{%
  \IfNoValueTF{#1}{%
    \mat{\mathrm{D}} #2 \IfNoValueF{#3}{\left( #3 \right)}
  }{%
    \IfNoValueTF{#3}{
      \mat{\mathrm{D}}_{#1} #2
    }{
      \left. \mat{\mathrm{D}}_{#1} #2 \right|_{#1\IfNoValueF{#3}{= #3}}
    }
  }
}
\NewDocumentCommand{\gradient}{o m o}{%
  \IfNoValueTF{#1}{%
    \nabla #2 \IfNoValueF{#3}{\left( #3 \right)}
  }{%
    \IfNoValueTF{#3}{
      \nabla_{#1} #2
    }{
      \left. \nabla_{#1} #2 \right|_{#1\IfNoValueF{#3}{= #3}}
    }
  }
}
\NewDocumentCommand{\divergence}{m o}{%
  \operatorname{div} \left( #1 \right) \IfNoValueF{#2}{\left( #2 \right)}
}
\NewDocumentCommand{\hessian}{o m o}{%
  \IfNoValueTF{#1}{%
    \mat{\mathrm{H}} #2 \IfNoValueF{#3}{\left( #3 \right)}
  }{%
    \IfNoValueTF{#3}{
      \mat{\mathrm{H}}_{#1} #2
    }{
      \left. \mat{\mathrm{H}}_{#1} #2 \right|_{#1\IfNoValueF{#3}{= #3}}
    }
  }
}
\NewDocumentCommand{\laplaceop}{o m o}{%
  \IfNoValueTF{#1}{%
    \Delta #2 \IfNoValueF{#3}{\left( #3 \right)}
  }{%
    \left.
      \Delta #2
    \right|_{#1\IfNoValueF{#3}{= #3}}
}
}
\NewDocumentCommand{\rintegral}{o o m m}{%
  \int\IfNoValueF{#1}{_{#1}}\IfNoValueF{#2}{^{#2}} #4 \,\diff[1]{#3}%
}
\DeclareMathOperator*{\argmin}{arg\,min}
\newcommand*{\sigalg}{\mathcal{A}}
\newcommand*{\borelsigalg}[1]{\mathcal{B} \left( #1 \right)}
\newcommand*{\@given}[1]{%
  \nonscript\:#1\vert
  \allowbreak
  \nonscript\:
  \mathopen{}}
\providecommand*{\given}{}
\DeclarePairedDelimiterX{\probparens}[1]{(}{)}{%
  \renewcommand*{\given}{\@given{\delimsize}}%
  #1%
}
\newcommand*{\pmeas}{\mathrm{P}}
\NewDocumentCommand{\prob}{o s O{} m}{%
  \IfValueTF{#1}{#1}{\pmeas}%
  \IfBooleanTF{#2}{%
    \probparens*{#4}
  }{%
    \probparens[#3]{#4}
  }%
}
\NewDocumentCommand{\pdf}{O{p} s O{} m}{%
  #1%
  \IfBooleanTF{#2}{%
    \probparens*{#4}%
  }{%
    \probparens[#3]{#4}%
  }
}
\newcommand*{\rvec}[1]{{\bm{\mathrm{#1}}}}
\newcommand*{\rvw}{\rvec{w}}
\newcommand*{\rvepsilon}{\rvec{\epsilon}}
\DeclarePairedDelimiterX{\@condrv}[1]{.}{.}{%
  \renewcommand*{\given}{\@given{\delimsize}}%
  #1}
\NewDocumentCommand{\condrv}{som}{%
  \IfBooleanTF{#1}{%
    \@condrv*{#3}
  }{%
    \IfNoValueTF{#2}{%
      \begingroup%
      \renewcommand*{\given}{\@given{}}%
      #3%
      \endgroup%
    }{%
      \@condrv[#2]{#3}%
    }
  }
}
\NewDocumentCommand{\expectation}{o o m}{%
  \operatorname{\mathbb{E}}\IfNoValueF{#1}{_{#1\IfNoValueF{#2}{\sim #2}}} \left[ #3 \right]
}
\NewDocumentCommand{\covariance}{o o m m}{%
  \operatorname{Cov}\IfNoValueF{#1}{_{#1\IfNoValueF{#2}{\sim #2}}} \left[ #3, #4 \right]
}
\NewDocumentCommand{\variance}{o o m}{%
  \operatorname{\mathbb{V}}\IfNoValueF{#1}{_{#1\IfNoValueF{#2}{\sim #2}}} \left[ #3 \right]
}
\newcommand*{\gaussian}[2]{{\ensuremath{\operatorname{\mathcal{N}}\left(#1, #2\right)}}}
\newcommand*{\gaussianpdf}[3]{%
  {\ensuremath{\operatorname{\mathcal{N}}\left(#1; #2, #3\right)}}%
}
\newcommand*{\morproc}[1]{{\bm{\mathrm{#1}}}}
\newcommand*{\gp}[2]{{\ensuremath{\operatorname{\mathcal{GP}}}\left(#1, #2\right)}}
\NewDocumentCommand{\LkL}{m m o}{%
  #1 #2 \IfValueTF{#3}{#3}{#1}\dualop
}
\declaretheorem[style=plain]{theorem}
\declaretheorem[style=plain,sibling=theorem]{lemma}
\declaretheorem[style=definition]{definition}
\declaretheorem[style=definition]{assumption}
\newcommand*{\fsplaplace}{\textsc{FSP-Laplace}\xspace}
\newcommand*{\dnn}{\vf}  
\newcommand*{\insp}{\sX}  
\newcommand*{\dnninput}{\vx}  
\newcommand*{\paramsp}{\sW}  
\newcommand*{\paramdim}{p}  
\newcommand*{\params}{\vw}  
\newcommand*{\outsp}{\R^{\outdim}}  
\newcommand*{\outdim}{d'}  
\newcommand*{\targetsp}{\sY}  
\newcommand*{\dataset}{\sD}
\newcommand*{\dssize}{n}  
\newcommand*{\dsinputs}{\mX}  
\newcommand*{\dsinput}[1]{\dnninput^{(#1)}}  
\newcommand*{\dstargets}{\mY}  
\newcommand*{\dstarget}[1]{\vy^{(#1)}}  
\newcommand*{\batch}{\mathcal{B}}
\newcommand*{\batchsize}{b}
\newcommand*{\batchinputs}{\dsinputs_\batch}
\newcommand*{\batchinput}[1]{\dsinput{#1}_\batch}
\newcommand*{\batchtargets}{\dstargets_\batch}
\newcommand*{\batchtarget}[1]{\dstarget{#1}_\batch}
\newcommand*{\laobj}{R_\text{WS}}
\newcommand*{\map}{\params^\star}  
\newcommand*{\maphessian}{\mLambda}  
\newcommand*{\dnnjac}{\mJ_{\map}}  
\newcommand*{\losshessian}[1]{\mH^{(#1)}_{\map}}  
\newcommand*{\paramsrv}{\rvw}
\newcommand*{\dnnlin}{\dnn^\text{lin}}
\newcommand*{\gpprior}{\morproc{f}}  
\newcommand*{\gppriormean}{\vmu}  
\newcommand*{\gppriorcov}{\mSigma}  
\newcommand*{\fsplobj}{R_\text{FSP}}  
\newcommand*{\fsplobjlin}{R_\text{FSP}^\text{lin}}  
\newcommand*{\ctxpts}{\mC}  
\newcommand*{\ctxpt}{\vc}  
\newcommand*{\numctx}{n_\ctxpts}  
\newcommand*{\ctxinterp}{\vh_\mC}  
\newcommand*{\ctxdist}{\pmeas_\mC}  
\newcommand*{\fsplpriormean}{\vmu_{\map}}
\newcommand*{\fsplpriorcov}{\mSigma_{\map}}
\newcommand*{\graminvlr}{\mL}
\newcommand*{\fsplcovpinvlr}{\mM}
\newcommand*{\maphessianproj}{\mA}
\newcommand*{\gppathsp}{\banachsp}
\newcommand*{\gppath}{\vf}
\newcommand*{\dnnfns}{\sF}
\newcommand*{\potential}{\Phi}
\newcommand*{\gplaw}{\pmeas_{\gppathsp}}
\newcommand*{\gpposterior}{\gplaw^{\dstargets}}
\newcommand*{\gpposteriorrelax}{\gplaw^{\dstargets, \lambda}}
\newcommand*{\gplawdnnfns}{\pmeas_\dnnfns}
\newcommand*{\gpposteriordnnfns}{\gplawdnnfns^{\dstargets}}
  \crefname{assumption}{Assumption}{Assumptions}
  \crefname{assumption}{assumption}{assumptions}
\title{\fsplaplace: Function-Space Priors for the Laplace Approximation in Bayesian Deep Learning}
\author{%
  Tristan Cinquin\thanks{Equal contribution} \\
  Tübingen AI Center, University of Tübingen \\
  \texttt{tristan.cinquin@uni-tuebingen.de} \\
  \And
  Marvin Pförtner\footnotemark[1] \\
  Tübingen AI Center, University of Tübingen \\
  \texttt{marvin.pfoertner@uni-tuebingen.de} \\
  \And
  Vincent Fortuin \\
  Helmholtz AI, TU Munich \\
  \texttt{vincent.fortuin@tum.de} \\
  \And
  Philipp Hennig \\
  Tübingen AI Center, University of Tübingen \\
  \texttt{philipp.hennig@uni-tuebingen.de} \\
  \And
  Robert Bamler \\
  Tübingen AI Center, University of Tübingen \\
  \texttt{robert.bamler@uni-tuebingen.de}
}
\begin{document}

\maketitle

\begin{abstract}
  Laplace approximations are popular techniques for endowing deep networks with epistemic uncertainty estimates as they can be applied without altering the predictions of the trained network, and they scale to large models and datasets.
While the choice of prior strongly affects the resulting posterior distribution, computational tractability and lack of interpretability of the weight space typically limit the Laplace approximation to isotropic Gaussian priors, which are known to cause pathological behavior as depth increases.
As a remedy, we directly place a prior on function space.
More precisely, since Lebesgue densities do not exist on infinite-dimensional function spaces, we recast training as finding the so-called weak mode of the posterior measure under a Gaussian process (GP) prior restricted to the space of functions representable by the neural network.
Through the GP prior, one can express structured and interpretable inductive biases, such as regularity or periodicity, directly in function space, while still exploiting the implicit inductive biases that allow deep networks to generalize.
After model linearization, the training objective induces a negative log-posterior density to which we apply a Laplace approximation, leveraging highly scalable methods from matrix-free linear algebra. 
Our method provides improved results where prior knowledge is abundant (as is the case in many scientific inference tasks).
At the same time, it stays competitive for black-box supervised learning problems, where neural networks typically excel.
\end{abstract}

\section{Introduction}
\label{sec:intro}
Neural networks (NNs) have become the workhorse for many machine learning tasks, but they do not quantify the uncertainty arising from data scarcity---the \emph{epistemic} uncertainty.
NNs therefore cannot estimate their confidence in their predictions, which is needed for safety-critical applications, decision making, and scientific modeling \citep{effenberger2024ProbWindSpeed, indrayan2020UncertMed, tazi2023towards}.
As a solution, Bayesian neural networks (BNNs) cast training as approximating the Bayesian posterior distribution over the weights (i.e., the distribution of the model parameters given the training data), thus naturally capturing epistemic uncertainty.
Various methods exist for approximating the (intractable) posterior, either by a set of samples (e.g., MCMC) or by a parametric distribution (e.g., variational inference or Laplace approximations).
While sampling methods can be asymptotically exact for large numbers of samples, they are typically expensive for large models, whereas variational inference (VI) and the Laplace approximation are more scalable.
The Laplace approximation is particularly appealing as it does not alter maximum-a-posteriori (MAP) predictions.
Compared to VI, its differential nature allows the Laplace approximation to scale to large datasets and models, and it has been shown to provide well-calibrated uncertainty estimates \citep{immer2021improving,daxberger2022laplace}.
\begin{figure*}[t]
    \centering
    \subfloat[\fsplaplace posterior predictive of a BNN under different choices of GP prior]{\includegraphics[width=0.99\linewidth, height=0.75in]{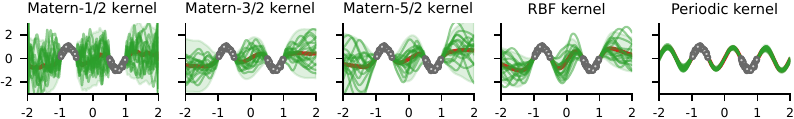}}\\ \vspace{-0.7em}
    \subfloat[\fsplaplace compared to related approximate inference methods and models]{\includegraphics[width=0.99\linewidth, height=0.75in]{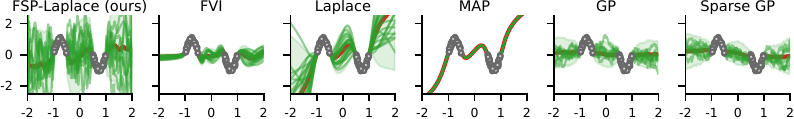}}
    \caption{\fsplaplace allows for efficient approximate Bayesian neural network (BNN) inference under interpretable function space priors. Using our method, it is possible to encode functional properties like smoothness, lengthscale, or periodicity through a Gaussian process (GP) prior.
    The gray data points in the plots are noisy observations of a periodic function.}
    \label{fig:prior_varying_prior}
    \vspace{-1.5em}
\end{figure*}
\vspace{-0.5em}
\paragraph{The need for function-space priors in BNNs.}
While the choice of prior strongly influences the uncertainty estimates obtained from the posterior \citep{fortuin2022priorsreview, li2024BoBNN}, there are hardly any methods for elicitation of informative priors in the literature on BNNs, with the notable exceptions of \citet{sun2019functional} and \citet{tran2022need}.
This is not just a conceptual issue.
For instance, the default isotropic Gaussian prior, commonly thought of as uninformative, actually carries incorrect beliefs about the posterior (uni-modality, independent weights, etc.) \citep{fortuin2022priorsreview,knoblauch2019generalized} and has known pathologies in deep NNs \citep{tran2022need}.
As network weights are not interpretable, formulating a good prior on them is virtually impossible.
Current methods work around this issue by model selection, either through expensive cross-validation, or type-II maximum likelihood estimation with a Laplace approximation under an isotropic Gaussian prior, which can only be computed exactly for small networks, and has known issues with normalization layers \citep{immer2021scalable,antoran2022adapting}.
As a solution, \citet{sun2019functional} proposed a VI method to specify priors directly on the function implemented by the BNN, with promising results.
Function-space priors incorporate interpretable knowledge about the variance, regularity, periodicity, or length scale, building on the extensive Gaussian Process (GP) literature~\citep{rasmussen2006GP}.
But it turns out that the method by \citet{sun2019functional} is difficult to use in practice: the Kullback-Leibler (KL) divergence regularizing VI is infinite for most function-space priors of interest~\citep{burt2020understanding}, and estimating finite variants of it is challenging~\citep{ma2021FVISGP}.

This paper addresses the lack of a procedure to specify informative priors in the Laplace approximation by proposing a method to use interpretable GP priors.
Motivated by MAP estimation theory, we first derive an objective function that regularizes the neural network in function space using a GP prior, and whose minimizer corresponds to the MAP estimator of a GP on the space of functions represented by the network.
We then apply the Laplace approximation to the log-posterior density induced by the objective, allowing us to effectively incorporate beliefs from a GP prior into a deep network (see \cref{fig:prior_varying_prior}).
Efficient matrix-free methods from numerical linear algebra allow scaling our method to large models and datasets.
We show the effectiveness of our method by showing improved results in cases where prior knowledge is available, and competitive performance for black-box regression and classification tasks, where neural networks typically excel.
We make the following contributions:
\begin{enumerate}
    \item We propose a novel objective function for deep neural networks that allows incorporating interpretable prior knowledge through a GP prior in function space.
    \item We develop an efficient and scalable Laplace approximation that endows the neural network with epistemic uncertainty reflecting the beliefs specified by the GP prior in function space.
    \item A range of experiments shows that our method improves performance on tasks with prior knowledge in the form of a kernel, while showing competitive performance for black-box regression, classification, out-of-distribution detection, and Bayesian optimization tasks.
\end{enumerate}
\section{Preliminaries: Laplace approximation in weight space}
\label{sec:preliminaries}

For a given dataset $\dataset = (\dsinputs, \dstargets)$ of inputs $\dsinputs = (\dsinput{i})_{i = 1}^\dssize \in \insp^\dssize$ and targets $\dstargets = (\dstarget{i})_{i = 1}^\dssize \in \targetsp^\dssize$, we consider a model of the data that is parameterized by a neural network $\dnn \colon \insp \times \paramsp \to \outsp$ with weights $\params \in \paramsp \subset \R^\paramdim$, a likelihood $\pdf{\dstargets \given \dnn(\dsinputs, \params)} \defeq \prod_{i=1}^\dssize \pdf{\dstarget{i} \given \dnn(\dsinput{i}, \params)}$, and a prior~$\pdf{\params}$.

We seek the Bayesian posterior $\pdf{\params \given \dataset} \propto \pdf{\dstargets \given \dnn(\dsinputs, \params)}\, \pdf{\params}$.
As it is intractable in BNNs, approximate inference methods have been developed.
Among these, the Laplace approximation \citep{mackay1992laplace} to the posterior is given by a Gaussian distribution $\pdf[q]{\paramsrv = \params} = \gaussianpdf{\params}{\map}{\maphessian\inv}$, whose parameters are found by MAP estimation $\map \in \argmin_{\params \in \paramsp} \laobj(\params)$ of the weights, where
\begin{equation}
\label{eqn:laplace_obj}
  \laobj(\params)
  \defeq -\log \pdf{\params \given \dataset}
  = -\log \pdf{\params} - \sum_{i = 1}^\dssize \log \pdf{\dstarget{i} \given \dnn(\dsinput{i}, \params)} + \text{const.}
\end{equation}
and computing the Hessian of the negative log-posterior $\maphessian \defeq -\hessian[\params]{\log \pdf{\params \given \dataset}}[\map] \in \R^{\paramdim \times \paramdim}$.

\paragraph{The linearized Laplace approximation.}
Computing $\maphessian$ involves the Hessian of the neural network w.r.t. its weights which is generally expensive.
To make the Laplace approximation scalable, it is common to linearize the network around $\map$ before computing $\maphessian$ \citep{immer2021linlaplace},
\begin{equation}
  \label{eqn:dnnlin}
  \dnn(\dnninput, \params)
  \approx \dnn(\dnninput, \map) + \dnnjac(\dnninput) (\params - \map)
  \rdefeq \dnnlin(\dnninput, \params)
\end{equation}
with Jacobian $\dnnjac(\dnninput) = \jacobian[\params]{\,\dnn(\dnninput, \params)}[\map]$.
Thus, the approximate posterior precision matrix~$\maphessian$ is
\begin{align}\label{eq:approximate-posterior-precision}
  \maphessian
  & \approx - \hessian[\params]{\log \pdf{\params}}[\map] - \sum_{i = 1}^\dssize \dnnjac(\dsinput{i})\T \losshessian{i} \dnnjac(\dsinput{i}),
\end{align}
where $\losshessian{i} = \hessian[\dnn]{\,\log\pdf{\dstarget{i} \given \dnn}}[\dnn(\dsinput{i}, \map)]$.
Thus, under the linearized network, the Hessian of the negative log-likelihood (NLL) coincides with the \emph{generalized Gauss-Newton matrix} (GGN) of the NLL.
Crucially, the GGN is positive-(semi)definite even if $-\hessian[\params]{\log \pdf{\dstargets \given \dnn(\dsinputs, \params)}}[\map]$ is not.
\section{\fsplaplace: Laplace approximation under function-space priors}
\label{sec:fsp-laplace}
A conventional Laplace approximation in neural networks requires a prior in weight space, with the issues discussed in \cref{sec:intro}.
We now present \fsplaplace, a method for computing Laplace approximations under interpretable GP priors in function space.
\cref{sec:laplace-fs} introduces an objective function that is a log-density under local linearization.
\cref{sec:fsplaplace} proposes a scalable algorithm for the linearized Laplace approximation with a function-space prior using matrix-free linear algebra.
\subsection{Laplace approximations in function space}
\label{sec:laplace-fs}
We motivate our Laplace approximation in function space through the lens of MAP estimation in an (infinite-dimensional) function space under a GP prior.
As Lebesgue densities do not exist in (infinite-dimensional) function spaces, we cannot minimize \cref{eqn:laplace_obj} to find the MAP estimate.
We address this issue using a generalized notion of MAP estimation, resulting in a minimizer of a regularized objective on the reproducing kernel Hilbert space $\rkhs{\gppriorcov}$.
We then constrain the objective to the set of functions representable by the neural network and minimize it using tools from deep learning.
Finally, local linearization of the neural network turns this objective into a valid negative log-density, from which we obtain the posterior covariance by computing its Hessian.
\paragraph{MAP estimation in neural networks under Gaussian process priors.}
\label{sec:laplace-obj-map}
The first step of the Laplace approximation is to find a MAP estimate of the neural network weights $\params$, i.e., the minimizer of the negative log-density function in \Cref{eqn:laplace_obj} w.r.t.~the Lebesgue measure as a ``neutral'' reference measure.
In our method, we regularize the neural network in function space using a $\outdim$-output GP prior $\gpprior \sim \gp{\gppriormean}{\gppriorcov}$ with index set $\insp$.
However, a (nonparametric) GP takes its values in an infinite-dimensional (Banach) space $\gppathsp$ of functions, where no such reference Lebesgue measure exists \citep{Oxtoby1946InvariantMeasuresGroups}.
Rather, the GP induces a prior \emph{measure} $\gplaw$ on $\gppathsp$ \citep[Section B.2]{Pfoertner2022LinPDEGP}.
We are thus interested in the ``mode'' of the posterior measure $\gpposterior$ under $\gplaw$ defined by the Radon-Nikodym derivative $\gpposterior(\diff{\gppath}) \propto \exp \left( - \potential^\dstargets(\diff{\gppath}) \right) \gplaw(\diff{\gppath})$ where $\potential^\dstargets$ is the \emph{potential}, in essence the negative log-likelihood functional of the model.
In our case, 
\(
\potential^\dstargets(\gppath)
= -\sum_{i = 1}^\dssize \log \pdf{\dstarget{i} \given \gppath(\dsinput{i})}.
\)
Similar to Bayes' rule in finite dimensions, this Radon-Nikodym derivative relates the prior measure to the posterior measure by reweighting.
Since there is no Lebesgue measure, the (standard) mode is undefined, and we therefore follow \citet{Lambley2023StrongMAP}, using so-called \emph{weak modes}.
\begin{definition}[Weak Mode {\citep[see e.g.,][Definition 2.1]{Lambley2023StrongMAP}}]
  \label{def:weak-mode}
  Let $\gppathsp$ be a separable Banach space and let $\pmeas$ be a probability measure on $(\gppathsp, \borelsigalg{\gppathsp})$.
  A \emph{weak mode} of $\pmeas$ is any point $\gppath^\star \in \operatorname{supp} \pmeas$ such that
  \begin{equation}
    \limsup_{r \downarrow 0} \frac{\prob{\ball{r}{\gppath}}}{\prob{\ball{r}{\gppath^\star}}} \le 1 \qquad \text{for all $\gppath \in \gppathsp$.}
  \end{equation}
\end{definition}
Above, $\borelsigalg{\gppathsp}$ denotes the Borel $\sigma$-algebra on $\gppathsp$ and $\ball{r}{\gppath} \subset \gppathsp$ is an open ball with radius $r$ centered at a point $\gppath \in \gppathsp$.
The intuition for the weak mode is the same as for the finite-dimensional mode (indeed they coincide when a Lebesgue measure exists), but the weak mode generalizes this notion to infinite-dimensional (separable) Banach spaces.
Under certain technical assumptions on the potential $\potential^\dstargets$ (see \cref{asm:potential} in the appendix), \citet{Lambley2023StrongMAP} shows that any solution to
\begin{equation}\label{eqn:fs_map}
  \argmin_{\gppath \in \rkhs{\gppriorcov}} \underbracket[.1ex]{\potential^\dstargets(\gppath) + \frac{1}{2} \norm{\gppath - \gppriormean}[\rkhs{\gppriorcov}]^2}_{\rdefeq \fsplobj(\gppath)}
\end{equation}
is a weak mode of the posterior probability measure $\gpposterior$.

\cref{eqn:fs_map} casts the weak mode of the posterior measure $\gpposterior$ as the solution of an optimization problem in the RKHS $\rkhs{\gppriorcov}$.
We can now relate it to an optimization problem in weight space $\paramsp$, and apply tools from deep learning.
Informally speaking, we assume that the intersection of the set of functions represented by the neural network
\(
\dnnfns
\defeq \set{\dnn(\,\cdot\,, \params) \where \params \in \paramsp}
\subset \gppathsp
\subset \maps{\insp}{(\R^{\outdim})}
\)
and $\rkhs{\gppriorcov}$ is non-empty, and constrain the optimization problem in \cref{eqn:fs_map} such that $\gppath$ belongs to both sets
\begin{equation}
  \label{eqn:fsp-laplace-fnsp}
  \argmin_{\gppath \in \rkhs{\gppriorcov} \cap \dnnfns} \fsplobj(\gppath).
\end{equation}
Unfortunately, the framework by \citet{Lambley2023StrongMAP} cannot give probabilistic meaning to optimization problems with hard constraints of the form $\gppath \in \dnnfns$.
To address this, we adopt and elaborate on the informal strategy from \citet[Remark 2.4]{Chen2021SolvingLearningNonlinear}.
Denote by
\(
d_{\gppathsp}(\gppath, \dnnfns)
\defeq \inf_{\gppath_0 \in \dnnfns} \norm{\gppath_0 - \gppath}[\gppathsp]
\)
the distance of a function $\gppath \in \gppathsp$ to the set $\dnnfns \subset \gppathsp$.
Under \cref{asm:dnnfns-rkhs-closed-compact}, $d_{\gppathsp}(\gppath, \dnnfns) = 0$ if and only if $\gppath \in \dnnfns$ by \cref{lem:point-set-dist-attained}.
Hence, we can relax the constraint $\gppath \in \dnnfns$ by adding $\frac{1}{2 \lambda^2} d_{\gppathsp}^2(\gppath, \dnnfns)$ with $\lambda>0$ to the objective in \cref{eqn:fs_map}.
Intuitively, the resulting optimization problem is the MAP problem for the measure $\gpposteriorrelax$ obtained by conditioning $\gpposterior$ on the observation that $d_{\gppathsp}(\gpprior, \dnnfns) + \rvepsilon_\lambda = 0$, where $\rvepsilon_\lambda$ is independent centered Gaussian measurement noise with variance $\lambda^2$.
Formally:
\begin{restatable}{proposition}{PropRelaxedObjectiveMAP}
  \label{prop:relaxed-objective-map}
  Let \cref{asm:gp,asm:potential,asm:dnnfns-rkhs-closed-compact} hold.
  For $\lambda > 0$, define
  \(
    \potential^{\dstargets, \lambda} \colon \gppathsp \to \R,
    \gppath \mapsto \potential^{\dstargets}(\gppath) + \frac{1}{2 \lambda^2} d_{\gppathsp}^2(\gppath, \dnnfns).
  \)
  Then the posterior measure
  \(
    \gpposteriorrelax(\diff{\gppath})
    \propto
    \exp \left( -\potential^{\dstargets, \lambda}(\diff{\gppath}) \right)
    \gplaw(\diff{\gppath})
  \)
  has at least one weak mode $\gppath^\star \in \rkhs{\gppriorcov}$,
  and the weak modes of $\gpposteriorrelax$ coincide with the minimizers of
  \begin{equation}
    \label{eqn:relaxed-fsp-laplace-obj}
    \fsplobj^\lambda \colon \rkhs{\gppriorcov} \to \R,
    \gppath \mapsto \potential^{\dstargets, \lambda}(\gppath) + \frac{1}{2} \norm{\gppath - \gppriormean}[\rkhs{\gppriorcov}]^2.
  \end{equation}
\end{restatable}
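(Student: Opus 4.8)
The plan is to obtain the result as an application of the weak-mode characterization of \citet{Lambley2023StrongMAP} — the same theorem underlying \cref{eqn:fs_map} — but now with the relaxed potential $\potential^{\dstargets, \lambda}$ in place of $\potential^\dstargets$. Concretely, I would split the argument into two parts: (i) showing that $\potential^{\dstargets, \lambda}$ again satisfies the hypotheses on the potential (\cref{asm:potential}) that make that theorem applicable, so that the weak modes of $\gpposteriorrelax$ coincide with the minimizers of the Onsager--Machlup functional $\fsplobj^\lambda$; and (ii) showing that $\fsplobj^\lambda$ actually attains its infimum on $\rkhs{\gppriorcov}$, which then yields existence of at least one weak mode.

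For part (i), the key observation is that the penalty $\gppath \mapsto d_{\gppathsp}(\gppath, \dnnfns)$ is globally $1$-Lipschitz on $\gppathsp$ by the reverse triangle inequality applied to the defining infimum, so its square is locally Lipschitz with locally bounded Lipschitz constant and at-most-quadratic growth. Adding the nonnegative term $\frac{1}{2\lambda^2} d_{\gppathsp}^2(\,\cdot\,, \dnnfns)$ therefore preserves every structural property of $\potential^\dstargets$ demanded by \cref{asm:potential}: it keeps the potential bounded below (the added term is $\ge 0$), it preserves local Lipschitz continuity, and it preserves the integrability and tail conditions, since $\exp(-\potential^{\dstargets, \lambda}) \le \exp(-\potential^\dstargets)$ pointwise keeps the normalizing constant finite while finiteness of $\potential^{\dstargets, \lambda}$ everywhere keeps it strictly positive. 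Hence $\gpposteriorrelax$ is a well-defined probability measure on $(\gppathsp, \borelsigalg{\gppathsp})$, and the characterization theorem applies, equating the weak modes of $\gpposteriorrelax$ with the minimizers of $\fsplobj^\lambda = \potential^{\dstargets, \lambda} + \frac{1}{2}\norm{\,\cdot\, - \gppriormean}[\rkhs{\gppriorcov}]^2$.

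For part (ii), I would run the direct method of the calculus of variations on the Hilbert space $\rkhs{\gppriorcov}$. The quadratic term $\frac{1}{2}\norm{\gppath - \gppriormean}[\rkhs{\gppriorcov}]^2$ makes $\fsplobj^\lambda$ coercive (both remaining summands are bounded below), so any minimizing sequence is norm-bounded in $\rkhs{\gppriorcov}$ and admits a weakly convergent subsequence $\gppath_n \rightharpoonup \gppath^\star$. I then verify weak lower semicontinuity of each summand: the RKHS-norm term is weakly lsc because it is convex and continuous; the potential $\potential^\dstargets$ is weakly continuous because every point evaluation $\gppath \mapsto \gppath(\dsinput{i})$ is a bounded linear functional on $\rkhs{\gppriorcov}$ by the reproducing property, so composing with the continuous log-likelihood is weakly continuous; and the relaxed penalty is handled through \cref{asm:dnnfns-rkhs-closed-compact}. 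Combining these gives $\fsplobj^\lambda(\gppath^\star) \le \liminf_n \fsplobj^\lambda(\gppath_n) = \inf \fsplobj^\lambda$, so $\gppath^\star$ minimizes $\fsplobj^\lambda$ and, by part (i), is a weak mode lying in $\rkhs{\gppriorcov}$.

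I expect the main obstacle to be the weak lower semicontinuity of the distance penalty in part (ii): because $\dnnfns$ is the image of the neural-network parameterization and hence not convex, $d_{\gppathsp}(\,\cdot\,, \dnnfns)$ is non-convex and its weak lsc cannot be read off from convexity. The resolution I would pursue is to exploit the compactness encoded in \cref{asm:dnnfns-rkhs-closed-compact} (and already used in \cref{lem:point-set-dist-attained}): under a compact embedding $\rkhs{\gppriorcov} \hookrightarrow \gppathsp$, weak convergence $\gppath_n \rightharpoonup \gppath^\star$ in $\rkhs{\gppriorcov}$ upgrades to norm convergence in $\gppathsp$, whereupon the $\gppathsp$-continuity (indeed $1$-Lipschitzness) of $d_{\gppathsp}(\,\cdot\,, \dnnfns)$ gives $d_{\gppathsp}^2(\gppath_n, \dnnfns) \to d_{\gppathsp}^2(\gppath^\star, \dnnfns)$ directly. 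A secondary point requiring care is matching the precise integrability and measurability clauses of \cref{asm:potential}; these go through by repeatedly invoking nonnegativity of the penalty, but I would check them clause by clause rather than wave them through.
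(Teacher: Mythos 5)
Your part (i) is essentially the paper's entire proof: the paper notes that $d_{\gppathsp}(\,\cdot\,, \dnnfns)$ is $1$-Lipschitz (\cref{lem:point-set-dist-lipschitz}) and nonnegative, concludes that $\potential^{\dstargets, \lambda}$ is norm-continuous and still satisfies the quadratic lower bound of \cref{asm:potential} (the added penalty is simply dropped from the bound), and then cites Theorem 1.1 of \citet{Lambley2023StrongMAP}. The genuine difference lies in what is extracted from that theorem: the paper reads off \emph{both} the existence of a weak mode in $\rkhs{\gppriorcov}$ and the identification of weak modes with minimizers of $\fsplobj^\lambda$, so no separate existence argument is needed, whereas you use the theorem only for the identification and re-prove existence by the direct method. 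Your part (ii) is therefore redundant, though it is a valid, self-contained alternative, and it anticipates exactly the tools the paper deploys for \cref{prop:relaxed-objective-map-convergence}: boundedness of a minimizing sequence, weak sequential compactness, compactness of the embedding $\rkhs{\gppriorcov} \hookrightarrow \gppathsp$ to upgrade weak convergence to $\gppathsp$-norm convergence, and $\gppathsp$-continuity of the distance penalty. Two corrections are needed, however. First, your coercivity justification (``both remaining summands are bounded below'') is not valid under \cref{asm:potential}, which allows $\potential^{\dstargets}$ to decrease like $-\eta \norm{\gppath}_{\gppathsp}^2$; coercivity instead follows by absorbing this term into the RKHS-norm term via the continuous embedding $\norm{\gppath}_{\gppathsp} \le C \norm{\gppath}[\rkhs{\gppriorcov}]$ and choosing $\eta$ small enough, yielding $\fsplobj^\lambda(\gppath) \ge K + \alpha \norm{\gppath}[\rkhs{\gppriorcov}]^2$ --- the bound from Section 4.1 of \citet{Lambley2023StrongMAP} that the paper itself invokes at the start of its proof of \cref{prop:relaxed-objective-map-convergence}. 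Second, the clauses you check in part (i) (local Lipschitzness, integrability and tail conditions) are not those of \cref{asm:potential}; the only two properties to verify are norm-continuity and the quadratic lower bound, both of which your Lipschitz-plus-nonnegativity observation does in fact deliver.
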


As $\lambda \to 0$, the term $\frac{1}{2 \lambda^2} d_{\gppathsp}^2(\gppath, \dnnfns)$ forces the minimizers of $\fsplobj^\lambda$ to converge to functions in $\rkhs{\gppriorcov} \cap \dnnfns$ that minimize $\fsplobj$:
\begin{restatable}{proposition}{PropRelaxedObjectiveMAPConvergence}
  \label{prop:relaxed-objective-map-convergence}
  Let \cref{asm:gp,asm:potential,asm:dnnfns-rkhs-closed-compact} hold.
  Let $\set{\lambda_n}_{n \in \N} \subset \R_{> 0}$ with $\lambda_n \to 0$, and
  $\set{\gppath^\star_n}_{n \in \N} \subset \rkhs{\gppriorcov}$ such that $\gppath^\star_n$ is a minimizer of $\fsplobj^{\lambda_n}$.
  Then $\set{\gppath^\star_n}_{n \in \N}$ has an $\rkhs{\gppriorcov}$-weakly convergent subsequence with limit $\gppath^\star \in \rkhs{\gppriorcov} \cap \dnnfns$.
  Moreover, $\gppath^\star$ is a minimizer of $\fsplobj$ on $\rkhs{\gppriorcov} \cap \dnnfns$.
\end{restatable}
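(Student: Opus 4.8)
The plan is to treat the relaxed objective $\fsplobj^\lambda$ in \cref{eqn:relaxed-fsp-laplace-obj} as a quadratic penalty (exterior-point) relaxation of the constrained problem in \cref{eqn:fsp-laplace-fnsp}, and to carry out the standard $\Gamma$-convergence-style argument in three moves: an a priori bound, extraction of a feasible limit, and a lower-semicontinuity closure. For the bound, I would fix any $\gppath_0 \in \rkhs{\gppriorcov} \cap \dnnfns$, nonempty under \cref{asm:dnnfns-rkhs-closed-compact}. Since $\gppath_0 \in \dnnfns$ implies $d_{\gppathsp}(\gppath_0, \dnnfns) = 0$ by \cref{lem:point-set-dist-attained}, the penalty vanishes at $\gppath_0$ and $\fsplobj^{\lambda_n}(\gppath_0) = \fsplobj(\gppath_0) \rdefeq C_0$ is constant in $n$. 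Minimality of $\gppath^\star_n$ then gives $\fsplobj^{\lambda_n}(\gppath^\star_n) \le C_0$, and using that $\potential^\dstargets$ is bounded below by some $-M$ (part of \cref{asm:potential}) I read off $\tfrac{1}{2}\norm{\gppath^\star_n - \gppriormean}[\rkhs{\gppriorcov}]^2 \le C_0 + M$ and $d_{\gppathsp}^2(\gppath^\star_n, \dnnfns) \le 2\lambda_n^2(C_0 + M)$. The former bounds $\set{\gppath^\star_n}$ in $\rkhs{\gppriorcov}$; the latter forces $d_{\gppathsp}(\gppath^\star_n, \dnnfns) \to 0$.

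Because $\rkhs{\gppriorcov}$ is a Hilbert space, the norm bound yields a subsequence $\gppath^\star_{n_k} \rightharpoonup \gppath^\star$ converging weakly in $\rkhs{\gppriorcov}$. The decisive step is to upgrade this to strong convergence in $\gppathsp$ via compactness of the inclusion $\rkhs{\gppriorcov} \hookrightarrow \gppathsp$, which follows from the kernel regularity in \cref{asm:gp} together with \cref{asm:dnnfns-rkhs-closed-compact}: a compact operator maps weakly convergent sequences to strongly convergent ones, so $\gppath^\star_{n_k} \to \gppath^\star$ in $\gppathsp$. Since $g \mapsto d_{\gppathsp}(g, \dnnfns)$ is $1$-Lipschitz on $\gppathsp$, I obtain $d_{\gppathsp}(\gppath^\star, \dnnfns) = \lim_k d_{\gppathsp}(\gppath^\star_{n_k}, \dnnfns) = 0$, whence $\gppath^\star \in \dnnfns$ by \cref{lem:point-set-dist-attained}. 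Together with $\gppath^\star \in \rkhs{\gppriorcov}$ this shows $\gppath^\star \in \rkhs{\gppriorcov} \cap \dnnfns$, i.e.\ the limit is feasible.

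To show $\gppath^\star$ minimizes $\fsplobj$ over $\rkhs{\gppriorcov} \cap \dnnfns$, I combine a lower bound on the limiting objective with an upper bound from optimality. Weak lower semicontinuity of the convex, strongly continuous regularizer gives $\tfrac12\norm{\gppath^\star - \gppriormean}[\rkhs{\gppriorcov}]^2 \le \liminf_k \tfrac12\norm{\gppath^\star_{n_k} - \gppriormean}[\rkhs{\gppriorcov}]^2$, and lower semicontinuity of $\potential^\dstargets$ along the strongly $\gppathsp$-convergent subsequence (from \cref{asm:potential}; note $\potential^\dstargets$ only sees the $\gppathsp$-continuous point evaluations $\gppath(\dsinput{i})$) gives $\potential^\dstargets(\gppath^\star) \le \liminf_k \potential^\dstargets(\gppath^\star_{n_k})$. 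For any competitor $\gppath \in \rkhs{\gppriorcov} \cap \dnnfns$ its penalty vanishes, so minimality yields $\fsplobj^{\lambda_{n_k}}(\gppath^\star_{n_k}) \le \fsplobj^{\lambda_{n_k}}(\gppath) = \fsplobj(\gppath)$. Discarding the nonnegative penalty term from the left-hand objective and using $\liminf a_k + \liminf b_k \le \liminf(a_k + b_k)$, I conclude $\fsplobj(\gppath^\star) \le \liminf_k \fsplobj^{\lambda_{n_k}}(\gppath^\star_{n_k}) \le \fsplobj(\gppath)$, which is the claim.

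The main obstacle is the clash of topologies in the middle step: the penalty term is controlled only in the $\gppathsp$-norm, so its vanishing transfers to the limit only under \emph{strong} $\gppathsp$-convergence, whereas coercivity and weak lower semicontinuity of the RKHS regularizer supply only \emph{weak} $\rkhs{\gppriorcov}$-compactness. Bridging the two is exactly the role of the compact embedding $\rkhs{\gppriorcov} \hookrightarrow \gppathsp$, and establishing its compactness under \cref{asm:gp,asm:dnnfns-rkhs-closed-compact} is the substantive technical point. Secondary care is needed to verify that $\potential^\dstargets$ is lower semicontinuous in the ambient topology and that $\gppath \mapsto \tfrac12\norm{\gppath - \gppriormean}[\rkhs{\gppriorcov}]^2$ is genuinely weakly lsc---the latter holding once $\gppriormean$ lies in the Cameron--Martin space, so that this map is convex and strongly continuous.
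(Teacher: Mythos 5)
Your proof follows the same penalty-method skeleton as the paper's (a priori bound, weakly convergent subsequence, compact embedding to upgrade to strong $\gppathsp$-convergence, Lipschitz distance plus closedness for feasibility, lower semicontinuity for optimality), but your very first step rests on a misreading of \cref{asm:potential}. That assumption does \emph{not} state that $\potential^\dstargets$ is bounded below by a constant $-M$; it only guarantees that for every $\eta > 0$ there is $K(\eta) \in \R$ with $\potential^\dstargets(\gppath) \ge K(\eta) - \eta \norm{\gppath}[\gppathsp]^2$, which still allows $\potential^\dstargets$ to diverge to $-\infty$ along unbounded sequences (for instance $\potential^\dstargets(\gppath) = -\norm{\gppath}[\gppathsp]^{3/2}$ is admissible). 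Consequently your inequalities $\tfrac{1}{2}\norm{\gppath^\star_n - \gppriormean}[\rkhs{\gppriorcov}]^2 \le C_0 + M$ and $d_{\gppathsp}^2(\gppath^\star_n, \dnnfns) \le 2\lambda_n^2(C_0 + M)$ do not follow: from $\fsplobj^{\lambda_n}(\gppath^\star_n) \le C_0$ alone you cannot isolate the regularizer or the penalty without first controlling the possibly very negative potential term.

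The repair is exactly the coercivity estimate the paper imports from \citet[Section 4.1]{Lambley2023StrongMAP}: since the Cameron--Martin embedding $\iota \colon \rkhs{\gppriorcov} \to \gppathsp$ is continuous with some operator norm $C$, choosing $\eta < 1/(2C^2)$ in \cref{asm:potential} gives (taking $\gppriormean = 0$ without loss of generality)
\begin{equation*}
  \fsplobj(\gppath)
  \ge K(\eta) + \bigl( \tfrac{1}{2} - \eta C^2 \bigr) \norm{\gppath}[\rkhs{\gppriorcov}]^2
  \rdefeq K + \alpha \norm{\gppath}[\rkhs{\gppriorcov}]^2,
\end{equation*}
and inserting this into $\fsplobj(\gppath^\star_n) + \tfrac{1}{2\lambda_n^2} d_{\gppathsp}^2(\gppath^\star_n, \dnnfns) \le C_0$ yields both $\norm{\gppath^\star_n}[\rkhs{\gppriorcov}]^2 \le (C_0 - K)/\alpha$ and $d_{\gppathsp}^2(\gppath^\star_n, \dnnfns) \le 2 \lambda_n^2 (C_0 - K)$. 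With this substitution the remainder of your argument goes through and coincides with the paper's proof (your closure step, which drops the nonnegative penalty and uses superadditivity of $\liminf$, is a harmless variant of the paper's argument via $\limsup_k \tfrac{1}{2\lambda_{n_k}^2} d_{\gppathsp}^2(\gppath^\star_{n_k}, \dnnfns) = 0$). Two smaller inaccuracies: compactness of $\iota$ is a general property of Cameron--Martin spaces of Gaussian measures on separable Banach spaces, cited in the paper as \citet[Corollary 3.2.4]{Bogachev1998GaussianMeasures}, so it follows from \cref{asm:gp} alone and owes nothing to \cref{asm:dnnfns-rkhs-closed-compact}; and no separate verification of lower semicontinuity of $\potential^\dstargets$ is needed, since \cref{asm:potential} already asserts norm-continuity, which together with compactness of $\iota$ gives sequential weak continuity of $\potential^\dstargets \circ \iota$ directly.
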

We prove \cref{prop:relaxed-objective-map,prop:relaxed-objective-map-convergence} in \cref{sec:proofs}.
To provide some intuition about the mode of convergence in \cref{prop:relaxed-objective-map-convergence}, we point out that $\rkhs{\gppriorcov}$-weak convergence of the subsequence $\set{\gppath^\star_{n_k}}_{k \in \N}$ implies both (strong/norm) convergence in the path space $\gppathsp$ of the Gaussian process $\gpprior$ (i.e., $\lim_{k \to \infty} \norm{\gppath^\star_{n_k} - \gppath^\star}[\gppathsp] = 0$) and pointwise convergence (i.e., $\lim_{k \to \infty} \gppath^\star_{n_k}(\vx) = \gppath^\star(\vx)\; \forall\vx \in \insp$).

Finally, under certain technical assumptions, Theorem 4.4 from \citet{Cockayne2019BayesianPNMethods} can be used to show that, as $\lambda \to 0$, $\gpposteriorrelax$ converges\footnote{w.r.t.~an integral probability metric (IPM)} to $\gpposteriordnnfns$ defined by the Radon-Nikodym derivative $\gpposteriordnnfns(\diff{\gppath}) \propto \exp \left( - \potential^{\dstargets}(\diff{\gppath}) \right) \gplawdnnfns(\diff{\gppath})$ where $\gplawdnnfns$ is the regular conditional probability measure $\prob[\gplaw]{\cdot \given \gpprior \in \dnnfns}$.

Summarizing informally, we address the nonexistence of a density-based MAP estimator by constructing a family $\set{\gppath^\star_\lambda}_{\lambda > 0}$ of weak modes of the related ``relaxed'' posteriors $\gpposteriorrelax(\diff{\gppath}) = \gpposterior(\diff{\gppath} \mid d_{\gppathsp}(\gpprior, \dnnfns) + \rvepsilon_\lambda = 0)$ (\cref{prop:relaxed-objective-map}).
These weak modes converge to minimizers $\gppath^\star$ of the optimization problem in \cref{eqn:fsp-laplace-fnsp} as $\lambda \to 0$ (\cref{prop:relaxed-objective-map-convergence}).
Moreover, under certain technical assumptions, the relaxed posteriors $\gpposteriorrelax$ converge to the ``true'' posterior $\gpposteriordnnfns$ as $\lambda \to 0$.
Given the above, we conjecture that the $\gppath^\star$ are weak modes of $\gpposteriordnnfns$, and leave the proof for future work.
This motivates using the objective in \cref{eqn:fsp-laplace-fnsp} to find the weak mode of the posterior measure $\gpposteriordnnfns$ that we wish to Laplace-approximate.
The next paragraph shows how this objective becomes a valid log-density.

\paragraph{The \fsplaplace objective as an unnormalized log-density.}
\label{sec:fsp-laplace-obj-laplace}
As a first step, we use \cref{alg:fsp-laplace-training}, discussed in detail below, to train the neural network using the objective function\footnotemark
\begin{equation}
    \fsplobj(\params) := \fsplobj(\dnn(\,\cdot\,,\params)) = -\sum_{i = 1}^\dssize \log \pdf{\dstarget{i} \given \dnn(\dsinput{i}, \params)} + \frac{1}{2} \norm{\dnn(\,\cdot\,, \params) - \gppriormean}_{\rkhs{\gppriorcov}}^2.
\end{equation}
\footnotetext{%
  In order to simplify notation, we will assume $\dnnfns \subset \rkhs{\gppriorcov}$ for the remainder of the main paper.%
}%
We then intuitively want to use the same objective function to compute an approximate posterior over the weights using a Laplace approximation.
For this to be well-defined, $\fsplobj(\params)$ needs to be a valid unnormalized negative log-density, i.e., $\params \mapsto \exp(-\fsplobj(\params))$ needs to be integrable.
However, without weight-space regularization, this often fails to be the case (e.g., due to continuous symmetries in weight space).
Our method works around this issue by linearizing the network locally around~$\map$ after training (see \cref{eqn:dnnlin}) and then applying a Laplace approximation to $\fsplobjlin(\params) \defeq \fsplobj(\dnnlin(\,\cdot\,,\params))$.
In this case, the RKHS regularizer in $\fsplobjlin$ can be rewritten as
\begin{align}
  \frac{1}{2} \norm{\dnnlin(\,\cdot\,, \params) - \gppriormean}_{\rkhs{\gppriorcov}}^2
  & = \frac{1}{2} (\params - \fsplpriormean)\T \fsplpriorcov\pinv (\params - \fsplpriormean) + \text{const}.,
\end{align}
where $(\fsplpriorcov\pinv)_{ij} \defeq \inprod{(\dnnjac)_i}{(\dnnjac)_j}_{\rkhs{\gppriorcov}}$ (here, $\fsplpriorcov\pinv$ is the Moore--Penrose pseudoinverse), $\vv_i \defeq \inprod{(\dnnjac)_i}{\dnn(\,\cdot\,, \map) - \gppriormean}_{\rkhs{\gppriorcov}}$, $\fsplpriormean \defeq \map - \fsplpriorcov \vv$.
Crucially, $\fsplpriorcov$ is positive-(semi)definite.
This means that $\exp(-\fsplobjlin(\,\cdot\,))$ is normalizable over $\im{\fsplpriorcov}$, i.e., we don't integrate over the null space.
Note that this approximation also induces a (potentially degenerate) Gaussian ``prior'' $\paramsrv \sim \gaussian{\fsplpriormean}{\fsplpriorcov}$ over the weights.

Using model linearization, we can also establish a correspondence between the Gaussian process prior in function space and the induced prior over the weights.
Namely, the resulting expression is the Lebesgue density of the $\rkhs{\gppriorcov}$-orthogonal projection of the GP prior onto the finite-dimensional subspace spanned by the ``feature functions''~$(\dnnjac)_i$ learned by the neural network.
Hence, our model inherits the prior structure in function space on the features induced by the Jacobian, zeroing out the probability mass in the remaining directions.

\subsection{Algorithmic Considerations}
\label{sec:fsplaplace}
\paragraph{Training with the \fsplaplace objective function.}
Evaluating the \fsplaplace objective proposed in the previous section requires computing the RKHS norm of the neural network.
Unfortunately, this does not admit a closed-form expression in general.
Hence, we use the approximation
\begin{equation}\label{eq:rkhs-norm-approx}
  \norm{\dnn(\,\cdot\,, \params) - \gppriormean}[\rkhs{\gppriorcov}]^2
  \approx
  \underbrace{(\dnn(\ctxpts, \params) - \gppriormean(\ctxpts))\T \gppriorcov(\ctxpts, \ctxpts)\inv (\dnn(\ctxpts, \params) - \gppriormean(\ctxpts))}_{= \norm{\ctxinterp(\,\cdot\,, \params)}[\rkhs{\gppriorcov}]^2},
\end{equation}
where $\ctxpts \in \insp^{\numctx}$ is a set of $\numctx$ \emph{context points} and $\ctxinterp(\dnninput, \params) \defeq \gppriorcov(\dnninput, \ctxpts) \gppriorcov(\ctxpts, \ctxpts)\inv (\dnn(\ctxpts, \params) - \gppriormean(\ctxpts)) \in \rkhs{\gppriorcov}$.
The function $\ctxinterp$ is the minimum-norm interpolant of $\dnn(\,\cdot\,, \params) - \gppriormean$ at $\ctxpts$ in $\rkhs{\gppriorcov}$.
Hence, the estimator of the RKHS norm provably underestimates, i.e., $\norm{\ctxinterp(\,\cdot\,, \params)}[\rkhs{\gppriorcov}]^2 \le \norm{\dnn(\,\cdot\,, \params) - \gppriormean}[\rkhs{\gppriorcov}]^2$.

During training, we need to compute $\norm{\ctxinterp(\,\cdot\,, \params)}[\rkhs{\gppriorcov}]^2$ at every optimizer step.
Since this involves solving a linear system in $\numctx$ unknowns and, more importantly, evaluating the neural network on the $\numctx$ context points, we need to keep $\numctx$ small for computational efficiency.
We find that sampling an i.i.d.~set of context points at every training iteration from a distribution $\ctxdist$ is an effective strategy for keeping $\numctx$ small while ensuring that the neural network is appropriately regularized.
The resulting training procedure is outlined in \cref{alg:fsp-laplace-training}.

\paragraph{Efficient linearized Laplace approximations of the \fsplaplace objective.}
Once a minimum $\map$ of $\fsplobj$ is found, we compute a linearized Laplace approximation at $\map$.
The Hessian of $\fsplobj$ is then
\begin{equation}\label{eq:hessian-rfsp}
  \maphessian = \fsplpriorcov\pinv - \sum_{i = 1}^\dssize \dnnjac(\dsinput{i})\T \losshessian{i} \dnnjac(\dsinput{i}).
\end{equation}
Again, the RKHS inner products in the entries of $\fsplpriorcov\pinv$ do not admit general closed-form expressions.
Hence, we use the same strategy as before to estimate
\(
\fsplpriorcov\pinv
\approx
\dnnjac(\ctxpts)\T \gppriorcov(\ctxpts, \ctxpts)\inv \dnnjac(\ctxpts)
\)
at another set $\ctxpts$ of context points.
Unlike above, for a Laplace approximation, it is vital to use a large number of context points to capture the prior beliefs well.
Luckily, $\fsplpriorcov\pinv$ only needs to be computed once, at the end of training.
But for large networks and large numbers of context points, it is still infeasible to compute or even represent $\fsplpriorcov\pinv$ in memory.
To address this problem, we devise an efficient routine for computing (a square root of) the approximate posterior covariance matrix $\maphessian\pinv$, outlined in \cref{alg:fsp-laplace}.
\begin{algorithm}[t]
  \caption{RKHS-regularized model training}
  \label{alg:fsp-laplace-training}
  \begin{algorithmic}[1]
    \Function{FSP-Laplace-Train}{$\dnn$, $\params$, $\gp{\gppriormean}{\gppriorcov}$, $\ctxdist$, $\dataset$, $\batchsize$}
      \ForAll{minibatch $\batch = (\batchinputs, \batchtargets) \sim \dataset$ of size $\batchsize$}
        \State $\fsplobj^{(1)}(\params^{(i)}) \gets -\frac{\dssize}{\batchsize} \sum_{j = 1}^{\batchsize} \log \pdf{\batchtarget{j} \given \dnn(\batchinput{j}, \params}$
        \Comment{Negative log-likelihood}
        \State $\ctxpts \gets (\ctxpt^{(j)})_{j = 1}^{\numctx} \stackrel{\textup{\tiny i.i.d.}}{\sim} \ctxdist$
        \Comment{Sample $\numctx$ context points}
        \State $\fsplobj^{(2)}(\params) \gets \frac{1}{2} (\dnn(\ctxpts, \params) - \gppriormean(\ctxpts))\T \gppriorcov(\ctxpts, \ctxpts)\inv (\dnn(\ctxpts, \params) - \gppriormean(\ctxpts))$
        \Comment{\cref{eq:rkhs-norm-approx}}
        \State $\params \gets \Call{OptimizerStep}{\fsplobj^{(1)} + \fsplobj^{(2)}, \params}$
      \EndFor
      \State \textbf{return} $\params$
    \EndFunction
  \end{algorithmic}
\end{algorithm}
\begin{algorithm}[t]
  \caption{Linearized Laplace approximation with Gaussian process priors}
  \label{alg:fsp-laplace}
  \begin{algorithmic}[1]
    \Function{FSP-Laplace}{$\dnn$, $\gp{\gppriormean}{\gppriorcov}$, $\ctxpts$, $\dataset$, $\map$}
      \State $\vv \gets \nicefrac{\dnnjac(\ctxpts) \vec{1}}{\norm{\dnnjac(\ctxpts) \vec{1}}_2}$
      \Comment{Initial Lanczos vector (using forward-mode autodiff)}
      \State $\graminvlr \gets \Call{Lanczos}{\gppriorcov(\ctxpts, \ctxpts), \vv} \in \R^{\numctx \outdim \times r}$
      \Comment{$r \ll \min(\numctx \outdim, \paramdim)$ and $\graminvlr \graminvlr\T \approx \gppriorcov(\ctxpts, \ctxpts)\pinv$}
      \State $\fsplcovpinvlr \gets \dnnjac(\ctxpts)\T \graminvlr \in \R^{\paramdim \times r}$
      \Comment{using backward-mode autodiff}
      \State $(\mU_\fsplcovpinvlr, \mD_\fsplcovpinvlr, \dotsc) \gets \operatorname{svd}(\fsplcovpinvlr)$
      \Comment{$\fsplcovpinvlr \fsplcovpinvlr\T = \mU_\fsplcovpinvlr \mD_\fsplcovpinvlr^2 \mU_\fsplcovpinvlr\T$ where $\mU_\fsplcovpinvlr \in \R^{\paramdim \times r}$}
      \State $\maphessianproj \gets \mD_\fsplcovpinvlr^2 - \sum_{i = 1}^\dssize \mU_\fsplcovpinvlr\T \dnnjac(\dsinput{i})\T \losshessian{i} \dnnjac(\dsinput{i}) \mU_\fsplcovpinvlr$
      \Comment{using forward-mode autodiff}
      \State $(\mU_\maphessianproj, \mD_\maphessianproj) \gets \operatorname{eigh}(\maphessianproj)$
      \Comment{$\mU_\maphessianproj, \mD_\maphessianproj \in \R^{r \times r}$}
      \State $\mS \gets \mU_\fsplcovpinvlr \mU_\maphessianproj \mD_\maphessianproj^{\nicefrac{\dagger}{2}} \in \R^{\paramdim \times r}$
      \Comment{$\mS \mS\T \approx \maphessian\pinv$}
      \State Find smallest $k$ such that
      $\operatorname{diag}\left(\dnnjac(\ctxpts_i) \mS_{:,k:r} (\mS_{:,k:r})\T \dnnjac(\ctxpts_i)\T\right) \le \operatorname{diag}\gppriorcov(\ctxpts_i, \ctxpts_i) \quad \forall i$
      \State \textbf{return} $\gaussian{\map}{\mS_{:,k:r} (\mS_{:,k:r})\T}$
    \EndFunction
  \end{algorithmic}
\end{algorithm}
Our method is \emph{matrix-free}, i.e.~it never explicitly constructs any big (i.e., $\paramdim \times \paramdim$, $\paramdim \times \numctx$, or $\numctx \times \numctx$) matrices in memory.
This allows the method to scale to large models.

We start by computing a rank $r \ll \min(\numctx \outdim, \paramdim)$ approximation of the (pseudo-)inverted kernel Gram matrix $\gppriorcov(\ctxpts, \ctxpts)\pinv \approx \graminvlr \graminvlr\T$ using fully-reorthogonalized Lanczos iteration \citep[Section 10.1]{Golub2013MatrixComputations} with an embedded $\mL\mD\mL\T$-factorization \citep[Section 11.3.5]{Golub2013MatrixComputations}.
This only needs access to matrix-vector products with $\gppriorcov(\ctxpts, \ctxpts)$, which can be implemented efficiently without materializing the matrix in memory.
Moreover, kernel Gramians typically exhibit rapid spectral decay \citep[see e.g.,][]{Bach2024SpectralPropertiesKernels}, which makes the low-rank approximation particularly accurate.
The low-rank factors $\graminvlr$ then yield a rank $r$ approximation of $\fsplpriorcov\pinv \approx \fsplcovpinvlr \fsplcovpinvlr\T$ with $\fsplcovpinvlr \defeq \dnnjac(\ctxpts)\T \graminvlr$, which is embarassingly parallel and can be computed using backward-mode autodiff to avoid materializing the network Jacobians in memory.
An eigendecomposition of $\fsplcovpinvlr \fsplcovpinvlr\T = \mU_\fsplcovpinvlr \mD_\fsplcovpinvlr^2 \mU_\fsplcovpinvlr\T$ can be computed in $\mathcal{O}(\paramdim r^2)$ time from a thin SVD of $\fsplcovpinvlr = \mU_\fsplcovpinvlr \mD_\fsplcovpinvlr \mV_\fsplcovpinvlr\T$.
The eigenvectors $\mU_\fsplcovpinvlr$ define an orthogonal projector $\mP \defeq \mU_\fsplcovpinvlr \mU_\fsplcovpinvlr\T$ onto the range and an orthogonal projector $\mP_0 \defeq \mI - \mU_\fsplcovpinvlr \mU_\fsplcovpinvlr\T$ onto the nullspace of $\fsplcovpinvlr \fsplcovpinvlr\T$.
This means that we have the decomposition $\maphessian = \mP \maphessian \mP\T + \mP_0 \maphessian \mP_0\T = \mU_\fsplcovpinvlr \maphessianproj \mU_\fsplcovpinvlr\T + \mP_0 \maphessian \mP_0\T$ with
\begin{equation}
  \maphessianproj
  \defeq \mU_\fsplcovpinvlr\T \maphessian \mU_\fsplcovpinvlr
  = \mD_\fsplcovpinvlr^2 - \sum_{i = 1}^\dssize \mU_\fsplcovpinvlr\T \dnnjac(\dsinput{i})\T \losshessian{i} \dnnjac(\dsinput{i}) \mU_\fsplcovpinvlr.
\end{equation}
We find that $\mP_0 \maphessian \mP_0\T$ is negligible in practice, and hence, we approximate $\maphessian \approx \mU_\fsplcovpinvlr \maphessianproj \mU_\fsplcovpinvlr\T$ (see \cref{tab:null_space_is_neglegible}).

Finally, by computing an eigendecomposition of $\mA = \mU_\maphessianproj \mD_\maphessianproj \mU_\maphessianproj\T \in \R^{r \times r}$ in $\mathcal{O}(r^3)$ time, we obtain an eigendecomposition of $\maphessian \approx \mU_\fsplcovpinvlr \maphessianproj \mU_\fsplcovpinvlr\T = (\mU_\fsplcovpinvlr \mU_\maphessianproj) \mD_\maphessianproj (\mU_\fsplcovpinvlr \mU_\maphessianproj)\T$,  which can be used to obtain a matrix-free representation of (a square root of) the (pseudo-)inverse of $\maphessian$.
Unfortunately, due to numerical imprecision, it is difficult to distinguish between zero eigenvalues of $\maphessianproj$ and those with small positive magnitudes.
Since we need to pseudo-invert the matrix to obtain the covariance matrix, this can explode predictive variance.
As a remedy, we use a heuristic based on the observation that, in a linear-Gaussian model, the marginal variance of the posterior is always upper-bounded by the marginal variance of the prior.
Hence, we impose that $\operatorname{diag}\left(\dnnjac(\ctxpts_i) \maphessian\pinv \dnnjac(\ctxpts_i)\right) \le \operatorname{diag}\gppriorcov(\ctxpts_i, \ctxpts_i)$ for all $i = 1, \dots, \numctx$ by successively truncating the smallest eigenvalues in $\mD_\maphessianproj$ until the condition is fulfilled.
This turns out to be an effective strategy to combat exploding predictive variance in practice.

\paragraph{Choice of context points.}

Methods for regularizing neural networks in function space rely on a set of context points to evaluate the regularizer \citep{sun2019functional,tran2022need,rudner2023tractable,rudner2023functionspace}.
The context points should cover the set of input locations where it is plausible that the model might be evaluated when deployed.
Popular strategies include uniform sampling from a bounded subset of the input space \citep{sun2019functional,tran2022need,rudner2023tractable,rudner2023functionspace}, from the training data \citep{sun2019functional}, and from additional (possibly unlabeled) datasets \citep{rudner2023tractable,rudner2023functionspace}.
For MAP estimation, we choose a uniform context point distribution $\ctxdist$ or sample from other datasets for high-dimensional inputs. We compute the posterior covariance using samples $\ctxpts$ from a low-discrepancy sequence, which effectively cover high-dimensional spaces.
\section{Experiments}
\label{sec:experiments}

We evaluate \fsplaplace on synthetic and real-world data, demonstrating that our method effectively incorporates beliefs specified through a GP prior; that it improves performance on regression tasks for which we have prior knowledge in the form of a kernel (Mauna Loa and ocean current modeling); and that it shows competitive performance on regression, classification, out-of-distribution detection, and Bayesian optimization tasks compared to baselines.

\paragraph{Baselines.}
We compare \fsplaplace to a deterministic neural network fit using maximum a-posteriori estimation with an isotropic Gaussian prior on the weights (MAP) and a neural network for which we additionally compute the standard linearized Laplace approximation (Laplace) \citep{immer2021linlaplace}.
We further compare our method to FVI \citep{sun2019functional}, which uses GP priors with VI, to a Gaussian process (GP) \citep{rasmussen2006GP} when the size of the dataset allows it, and to a sparse Gaussian process (sparse GP) \citep{hensman2013gaussian}.
We use the full Laplace approximation when the size of the neural network allows it, and otherwise consider the K-FAC or diagonal approximations \citep{ritter2018scalable}.
All neural networks share the same architecture.

\paragraph{Qualitative evaluation on synthetic data.}
We consider two synthetic data tasks: a $1$-dimensional regression task with randomly drawn noisy measurements of the function $y = \sin (2 \pi x)$, and the $2$-dimensional two-moons classification task from the \texttt{scikit-learn} library \citep{scikit-learn}.
For the regression task, data points are shown as gray circles, functions drawn from the posterior as green lines and the inferred mean function as a red line (see \cref{fig:flaplace_RBF_vs_baselines,fig:flaplace_periodic,fig:prior_varying_prior,fig:rbf_varying_lengscale,fig:varying_label_noise}).
For the classification task, we plot the predictive mean and 2-standard-deviations of the predictions for class $1$ (blue circles) in \cref{fig:classification_rbf,fig:classification_matern12}.
We apply \fsplaplace to the data with different GP priors and find that it successfully adapts to the specified beliefs.
For instance, by varying the GP prior, we can make our method generate functions that are periodic within the support of the context points (\cref{fig:flaplace_periodic}), and control their smoothness (\cref{fig:prior_varying_prior}) and length scale (\cref{fig:rbf_varying_lengscale}) without modifying the network's architecture, or adding features.
Such flexibility is impossible with the isotropic Gaussian weight-space priors used in the Laplace and MAP baselines.
These results carry over to classification, where our method shows a smooth decision boundary when equipped with an RBF kernel (\cref{fig:classification_rbf}), and a rough decision boundary when equipped with a Matern-1/2 kernel (\cref{fig:classification_matern12}).
Unlike the Laplace and MAP baselines, whose predictions remain confident beyond the support of the data, the \fsplaplace's posterior mean reverts to the zero-mean prior, and its posterior variance increases.
Our method also captures the properties specified by the GP prior better than FVI, especially for rougher Matern-1/2 kernels (\cref{fig:prior_varying_prior,fig:classification_matern12}).
While a sufficient number of context points is necessary to capture the beliefs specified by the GP prior, we find that, even with a very small number of context points, our method produces useful uncertainty estimates and reverts to the prior mean (\cref{fig:varying_ctxt_points_classification,fig:varying_ctxt_points_regression}).

\subsection{Quantitative evaluation on real-world data}
\label{sec:quantitative_eval}

We now move on to investigate \fsplaplace on two real-world scientific modeling tasks: forecasting the concentration of $\text{CO}_2$ at the Mauna Loa observatory and predicting ocean currents in the Gulf of Mexico.
We then assess the performance of our method on standard benchmark regression, classification, out-of-distribution detection, and Bayesian optimization tasks.
When reporting results, we bold the score with the highest mean as well as any scores whose standard-error bars overlap.

\paragraph{Mauna Loa dataset.}

We consider the task of modeling the monthly average atmospheric $\text{CO}_2$ concentration at the Mauna Loa observatory in Hawaii from 1974 to 2024 using data collected from the NORA global monitoring laboratory\footnote{https://gml.noaa.gov/ccgg/trends/data.html} \citep{keeling2000mcdm}.
This 1-dimensional dataset is very accurately modeled by a combination of multiple kernels proposed in \citet[Section 5.4.3]{rasmussen2006GP}.
We equip \fsplaplace with this informative kernel and with additional periodic features, and compare to FVI with the same prior and additional periodic features, to a GP with the same prior, and to the linearized Laplace with the same additional periodic features.
Using periodic features (partially) reflects the prior knowledge carried by the kernel.
Results are presented by \cref{tab:maunaloa_ocean_current} and \cref{fig:mauna_loa} in the Appendix.
We find that incorporating prior beliefs both via an informative prior and periodic features in \fsplaplace significantly reduces the mean squared error (MSE) compared to Laplace and GP baselines, and that our method is also more accurate than FVI, which uses variational inference.
In terms of expected log-likelihood, all neural networks under-estimate the likelihood scale, which results in poorer scores than the GP.
\begin{figure*}[t]
    \centering
    \resizebox{\linewidth}{!}{
    \includegraphics[width=\linewidth]{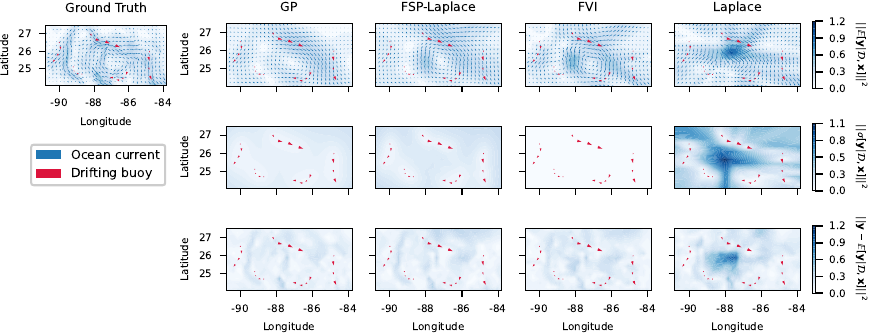}
    }
    \caption{Results for the ocean current modeling experiment. We report the mean velocity vectors, the norm of their standard-deviation and the squared errors of compared methods. Unlike the Laplace, we find that \fsplaplace accurately captures ocean current dynamics.}
    \label{fig:ocean_current}
\end{figure*}
\begin{table*}
\scshape
\caption{Results for the Mauna Loa $\text{CO}_2$ prediction and ocean current modeling tasks. Incorporating knowledge via the GP prior in our \fsplaplace improves performance over the standard Laplace.}
\label{tab:maunaloa_ocean_current}
\small
\centering
\resizebox{\linewidth}{!}{
\begin{tabular}{@{}lcccccccc@{}}
\toprule
Dataset & \multicolumn{4}{c}{Expected log-likelihood ($\uparrow$)} & \multicolumn{4}{c}{Mean squared error ($\downarrow$)} \\
\cmidrule(r){2-5} \cmidrule(l){6-9}
& \fsplaplace (ours) & FVI & Laplace & GP & \fsplaplace (ours) & FVI & Laplace  & GP \\
\midrule
Mauna loa  & -6'015.39 $\pm$ 443.13 & -13'594.57 $\pm$ 1'303.26 & -16'139.85 $\pm$ 2'433.70 & \textbf{-2'123.90 $\pm$ 0.00} & \textbf{8.90 $\pm$ 1.15} & 38.38 $\pm$ 2.10 & 28.26 $\pm$ 6.87 & 35.66 $\pm$ 0.00 \\
Ocean current & \textbf{\hphantom{-}0.2823 $\pm$ 0.0009} & -35.2741 $\pm$ 1.0278 & -126'845.4063 $\pm$ 14'470.5270  & -0.5069 $\pm$ 0.0000 & 0.0169 $\pm$ 0.0001 & 0.0184 $\pm$ 0.0003 & 0.0320 $\pm$ 0.0016 & \textbf{0.0131 $\pm$ 0.0000} \\
\bottomrule
\end{tabular}
}
\end{table*}

\paragraph{Ocean current modeling.}

We further evaluate \fsplaplace's ability to take into account prior knowledge by considering an ocean current modeling task where we are given sparse $2$-dimensional observations of velocities of ocean drifter buoys, and we are interested in estimating ocean currents further away from the buoys.
For this, we consider the GulfDrifters dataset \citep{lilly2021gulfdrifters} and we follow the setup by \citet{shalashilin2024GPocean}.
We incorporate known physical properties of ocean currents into the considered models by applying the Helmholtz decomposition to the GP prior's kernel \citep{berlinghieri2023gaussian} as well as to the neural network.
Results are shown in \cref{tab:maunaloa_ocean_current} and in \cref{fig:ocean_current}.
We find that \fsplaplace strongly improves over Laplace, FVI and GP in terms of expected log-likelihood, and performs competitively in terms of mean squared error (MSE).
\fsplaplace also improves MSE and test expected log-likelihood over FVI, which strongly underestimates the predictive variance.

\paragraph{Image classification.}
We further evaluate our method on the MNIST \citep{lecun2010mnist} and FashionMNIST \citep{xiao2017fmnist} image classification datasets using a convolutional neural network.
We compare our model to FVI, Laplace, MAP, and Sparse GP baselines, as the scale of the datasets forbids exact GP inference.
For \fsplaplace and FVI, we compare using context points drawn from a uniform distribution (\textsc{RND}) and drawn from the Kuzushiji-MNIST dataset (\textsc{KMNIST}) following the setup by \citet{rudner2023tractable}.
Results are presented in \cref{tab:classification_results}.
Although these datasets are particularly challenging to our method, which is regularized in function space, we find that \fsplaplace performs strongly and matches or exceeds the expected log-likelihood and predictive accuracy of best-performing baselines.
It also yields well-calibrated models with low expected calibration error (ECE).

\begin{figure*}[t]
    \centering
    \resizebox{\linewidth}{!}{
    \includegraphics[width=\linewidth]{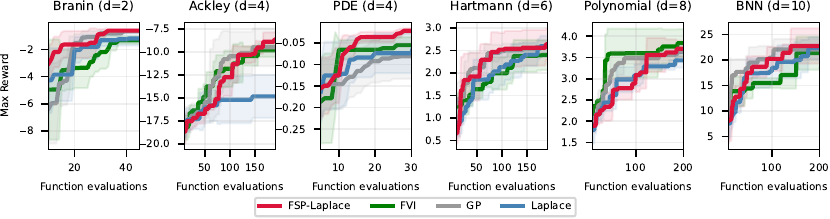}
    }
    \caption{Results using our method (\fsplaplace) as a surrogate model for Bayesian optimization. We find that \fsplaplace performs particularly well on lower-dimensional problems, where it converges more quickly and to higher rewards than the Laplace, obtaining comparable scores as the Gaussian process (GP).}
    \label{fig:bayes_opt_results}
\end{figure*}

\begin{table*}[t]
\scshape
\caption{
Test expected log-likelihood, accuracy, expected calibration error and OOD detection accuracy on MNIST and FashionMNIST. \fsplaplace performs strongly among baselines matching the accuracy of best-performing baselines and obtaining the highest expected log-likelihood and out-of-distribution detection accuracy.
}
\label{tab:classification_results}
\centering
\resizebox{\linewidth}{!}{
\renewcommand{\arraystretch}{1}
\begin{tabular}{@{}llccccccc@{}}
\toprule
Dataset & Metric & \fsplaplace (KMNIST) & \fsplaplace (RND) & FVI (KMNIST) & FVI (RND) & Laplace & MAP & Sparse GP \\
\midrule
\multirow{4}{*}{MNIST}
& Log-likelihood ($\uparrow$) & -0.043 $\pm$ 0.001 & \textbf{-0.037 $\pm$ 0.001} & -0.238 $\pm$ 0.006 & -0.145 $\pm$ 0.005 & -0.043 $\pm$ 0.001 & -0.039 $\pm$ 0.001 & -0.301 $\pm$  0.002 \\
& Accuracy ($\uparrow$) & \textbf{\hphantom{-}0.989 $\pm$ 0.000} & \textbf{\hphantom{-}0.989 $\pm$ 0.000} & \hphantom{-}0.943 $\pm$ 0.001 & \hphantom{-}0.976 $\pm$ 0.001 & \textbf{\hphantom{-}0.989 $\pm$ 0.001} & \hphantom{-}0.987 $\pm$ 0.000 & \hphantom{-}0.930 $\pm$  0.001\\
& ECE ($\downarrow$) & \textbf{\hphantom{-}0.002 $\pm$ 0.000} & \textbf{\hphantom{-}0.002 $\pm$ 0.000} & \hphantom{-}0.073 $\pm$ 0.003 & \hphantom{-}0.064 $\pm$ 0.001 & \hphantom{-}0.013 $\pm$ 0.001 & \hphantom{-}0.003 $\pm$ 0.000 & \hphantom{-}0.035 $\pm$ 0.001\\
& OOD accuracy ($\uparrow$) & \textbf{\hphantom{-}0.977 $\pm$ 0.003} & \hphantom{-}0.895 $\pm$ 0.011 & \hphantom{-}0.891 $\pm$ 0.006 & \hphantom{-}0.894 $\pm$ 0.010 & \hphantom{-}0.907 $\pm$ 0.016 & \hphantom{-}0.889 $\pm$ 0.009 & \hphantom{-}0.904$\pm$  0.020 \\
\midrule
\multirow{4}{*}{FMNIST}
& Log-likelihood ($\uparrow$) & \textbf{-0.255 $\pm$ 0.002} & \textbf{-0.259 $\pm$ 0.003} & -0.311 $\pm$ 0.005 & -0.300 $\pm$ 0.002 & -0.290 $\pm$ 0.006 & -0.281 $\pm$ 0.002 & -0.479 $\pm$ 0.002 \\
& Accuracy ($\uparrow$) & \textbf{\hphantom{-}0.909 $\pm$ 0.001} & \textbf{\hphantom{-}0.909 $\pm$ 0.001} & \hphantom{-}0.906 $\pm$ 0.002 & \textbf{\hphantom{-}0.910 $\pm$ 0.002} & \hphantom{-}0.897 $\pm$ 0.002 & \hphantom{-}0.900 $\pm$ 0.001 & \hphantom{-}0.848 $\pm$ 0.001 \\
& ECE ($\downarrow$) & \hphantom{-}0.018 $\pm$ 0.002 & \hphantom{-}0.020 $\pm$ 0.003 & \hphantom{-}0.024 $\pm$ 0.002 & \hphantom{-}0.027 $\pm$ 0.005 & \textbf{\hphantom{-}0.014 $\pm$ 0.002} & \textbf{\hphantom{-}0.016 $\pm$ 0.002} & \hphantom{-}0.027 $\pm$ 0.001 \\
& OOD accuracy ($\uparrow$) & \textbf{\hphantom{-}0.994 $\pm$ 0.001} & \hphantom{-}0.797 $\pm$ 0.007 & \hphantom{-}0.975 $\pm$ 0.002  & \hphantom{-}0.925 $\pm$ 0.005 & \hphantom{-}0.801 $\pm$ 0.014 & \hphantom{-}0.794 $\pm$ 0.010 & \hphantom{-}0.938 $\pm$ 0.007 \\
\bottomrule
\end{tabular}
}
\end{table*}

\paragraph{Out-of-distribution detection.}

We now investigate whether the epistemic uncertainty of \fsplaplace is predictive for out-of-distribution detection (OOD).
We follow the setup by \citet{osawa2019practical} and report the accuracy of a single threshold to classify OOD from in-distribution (ID) data based on the predictive uncertainty.
Additional details are provided in \cref{sec:experimental_setup_quantitative}.
\fsplaplace with context points sampled from \textsc{KMNIST} performs strongly, obtaining the highest out-of-distribution detection accuracy (see \cref{tab:classification_results}, note that \fsplaplace makes no assumption on whether context points are in or out of distribution).
This can be further observed in \cref{fig:ood_detection} in the Appendix, where the predictive entropy of ID data points is tightly peaked around $0$, whereas the predictive entropy of OOD data points is highly concentrated around the maximum entropy of the softmax distribution ($\ln (10) \approx 2.3$).
With \textsc{RND} context points, our method performs comparably to the Laplace baseline.

\paragraph{Bayesian optimization.}

We finally evaluate the epistemic uncertainty of \fsplaplace as a surrogate model for Bayesian optimization.
We consider a setup derived from \citet{li2024BoBNN}, comparing to FVI, the linearized Laplace, and to a GP.
Results are summarized in \cref{fig:bayes_opt_results}.
We find that our method performs particularly well on lower-dimensional tasks, where it converges both faster and to higher rewards than Laplace, and noticeably strongly improves over a Gaussian process on PDE.
On higher-dimensional tasks, our method performs comparatively well to Laplace and GP baselines.
\section{Related work}
\label{sec:related_work}
\paragraph{Laplace approximation in neural networks.}
First introduced by \citet{MacKay1992BayesianMF}, the Laplace approximation gained strong traction in the Bayesian deep learning community with the introduction of scalable log-posterior Hessian approximations \citep{ritter2018scalable,martens2020new}, and the so-called linearized Laplace, which solves the underfitting issue observed with standard Laplace \citep{immer2021linlaplace,khan2020approximateinferenceturnsdeep,foong2019inbetween,khan2021knowledgeadaptationpriors}.
In addition to epistemic uncertainty estimates, the Laplace approximation also provides a method to select prior parameters via marginal likelihood estimation \citep{immer2021scalable,antoran2022adapting}.
Recent work has made the linearized Laplace more scalable by restricting inference to a subset of parameters \citep{daxberger2022bayesian}, by exploiting its GP formulation \citep{immer2021linlaplace} to apply methods from the scalable GP literature \citep{deng2022accelerated,ortega2024variational,scannell2024functionspace}, or by directly sampling from the Laplace approximation without explicitly computing the covariance matrix \citep{antoran2023samplingbased}. 
While these approaches use the GP formulation to make the linearized Laplace more scalable, we are unaware of any method that uses GP priors to incorporate interpretable prior beliefs within the Laplace approximation.
\paragraph{BNNs with function-space priors.}
In the context of variational inference, function-space priors in BNNs demonstrate improvements in predictive performance compared to weight-space priors \citep{sun2019functional}.
While this idea might seem sound, it turns out that the KL divergence in the VI objective is infinite for most cases of interest due to mismatching supports between the function-space prior and BNN's predictive posterior \citep{burt2020understanding}, which therefore requires additional regularization to be well defined \citep{sun2019functional}.
Due to this issue, other work \citep{cinquin2024gfsvi} considers generalized VI \citep{knoblauch2019generalized} using the regularized KL divergence \citep{quang2019regularizedKL} or abandons approximating the neural network's posterior and instead uses deterministic neural networks as basis functions for Bayesian linear regression \citep{ma2021FVISGP} or for the mean of a sparse GP \citep{wild2022generalized}.
In contrast, our method does not compute a divergence in function space, but only the RKHS norm under the prior's kernel, thus circumventing the issue of mismatching support. 
Alternatively, rather than directly placing a prior on the function generated by a BNN, researchers have investigated methods to find weight-space priors whose pushforward approximates a target function-space measure by minimizing a divergence \citep{tran2022need,FlamShepherd2017MappingGP}, using the Ridgelet transform \citep{matsubara2022ridgelet}, or changing the BNN's architecture \citep{pearce2019expressive}.
\paragraph{Regularizing neural networks in function space.}
Arguing that one ultimately only cares about the output function of the neural network, it has been proposed to regularize neural networks in function space, both showing that norms could be efficiently estimated and that such regularization schemes performed well in practice \citep{rudner2023functionspace,benjamin2019regFS,chen2022ntkreg,qiu2023learn}. 
Unlike \fsplaplace, none of these methods allow to specify informative beliefs via a GP prior. 
Our method uses the same RKHS norm estimator as \citet{chen2022ntkreg} (however, under a different kernel) by sampling a new batch of context points at each update step. 
Similarly, \citet{rudner2023functionspace} propose an empirical prior on the weights that regularizes the neural network in function space, which they use for MAP estimation or approximate posterior inference. The MAP objective resembles ours, but unlike our method, uses the kernel induced by the last layer of the neural network and includes an additional Gaussian prior on the weights. 
\section{Conclusion}
\label{sec:conclusion}

We propose a method for applying the Laplace approximation to neural networks with interpretable Gaussian process priors in function space.
This addresses the issue that conventional applications of approximate Bayesian inference methods to neural networks require posing a prior in weight space, which is virtually impossible because weight space is not interpretable.
We address the non-existence of densities in (infinite-dimensional) function space by generalizing the notion of a MAP estimate to the limit of a sequence of weak modes of related posterior measures, leading us to propose a simple objective function.
We further mitigate the computational cost of calculating high-dimensional curvature matrices using scalable methods from matrix-free linear algebra.
By design, our method works best in application domains where prior information can be encoded in the language of Gaussian processes.
This is confirmed by experiments on scientific data and Bayesian optimization.
In high-dimensional spaces, where explicit prior knowledge is difficult to state, Gaussian process priors are naturally at a disadvantage.
While we do demonstrate superior performance on image data, it is yet unclear how to find good function-space priors in such high-dimensional spaces.

\begin{ack}
  TC, MP, PH, and RB are funded by the DFG Cluster of Excellence “Machine Learning - New Perspectives for Science”, EXC 2064/1, project number 390727645; the German Federal Ministry of Education and Research (BMBF) through the Tübingen AI Center (FKZ: 01IS18039A); and funds from the Ministry of Science, Research and Arts of the State of Baden-Württemberg.
MP and PH gratefully acknowledge financial support by the European Research Council through ERC StG Action 757275 / PANAMA, and ERC CoG Action 101123955 ANUBIS.
VF was supported by a Branco Weiss Fellowship.
RB further acknowledges funding by the German Research Foundation (DFG) for project 448588364 of the Emmy Noether Programme. 
The authors thank the International Max Planck Research School for Intelligent Systems (IMPRS-IS) for supporting TC and MP.

\end{ack}

\bibliography{neurips_2024}

\newpage
\appendix

\numberwithin{figure}{section}
\numberwithin{table}{section}

\numberwithin{definition}{section}
\numberwithin{theorem}{section}
\numberwithin{proposition}{section}
\numberwithin{lemma}{section}
\numberwithin{corollary}{section}
\numberwithin{remark}{section}
\numberwithin{assumption}{section}

\section*{Appendix}
\section{Theory}

\subsection{Assumptions and their applicability}
\label{sec:assumptions}

\begin{assumption}
  \label{asm:gp}
  $\gpprior \sim \gp{\gppriormean}{\gppriorcov}$ is a $\outdim$-output Gaussian process with index set $\insp$ on $(\Omega, \sigalg, \pmeas)$ such that
  \begin{enumerate}[label=(\roman*),nosep]
    \item \label{asm:gp-path-space}
      the paths of $\gpprior$ lie ($\pmeas$-almost surely) in a real separable Banach space $\gppathsp$ of $\outsp$-valued functions on $\insp$ with continuous point evaluation maps $\delta_\dnninput \colon \gppathsp \to \outsp$, and
    \item \label{asm:gp-grv}
      $\omega \mapsto \gpprior(\cdot, \omega)$ is a Gaussian random variable with values in $(\gppathsp, \borelsigalg{\gppathsp})$.
  \end{enumerate}
  We denote the law of $\omega \mapsto \gpprior(\cdot, \omega)$ by $\gplaw$.
\end{assumption}
For this paper, we focus on $\gppathsp = \cfns{\insp}$, with $\insp$ being a compact metric space.
In this case, \cref{asm:gp}\ref{asm:gp-path-space} can be verified from regularity properties of the prior covariance function $\gppriorcov$ \citep[see, e.g.,][]{DaCosta2023SamplePathRegularity}.
Moreover, the sufficient criteria from \citet[Section B.2]{Pfoertner2022LinPDEGP} show that \cref{asm:gp}\ref{asm:gp-grv} also holds in this case.

\begin{assumption}
  \label{asm:potential}
  The potential $\potential^\dstargets \colon \gppathsp \to \R$ is (norm-)continuous and, for each $\eta > 0$, there is $K(\eta) \in \R$ such that
  \(
  \potential^\dstargets(\gppath) \ge K(\eta) - \eta \norm{\gppath}_{\gppathsp}^2
  \)
  for all $\gppath \in \gppathsp$.
\end{assumption}
This holds if the negative log-likelihood functions $\ell^{(i)} \colon \R^{\outdim} \to \R, \dnn^{(i)} \mapsto -\log \pdf{\dstarget{i} \given \dnn^{(i)}}$ are continuous and, for all $\eta > 0$, there is $K(\eta) \in \R$ such that $\ell^{(i)}(\dnn^{(i)}) > K(\dnn^{(i)}) + \eta \norm{\dnn^{(i)}}^2$ for all $\dnn^{(i)} \in \R^{\outdim}$.
For instance, this is true for a Gaussian likelihood $\ell^{(i)}(\dnn^{(i)}) = \frac{1}{2\lambda_i^2} \norm{\dstarget{i} - \dnn^{(i)}}_2^2$.

\begin{assumption}
  \label{asm:dnnfns-rkhs-closed-compact}
  \leavevmode
  \begin{enumerate*}[label=(\roman*)]
    \item \label{asm:dnnfns-rkhs-subset}
      $\rkhs{\gppriorcov} \cap \dnnfns$ is nonempty,
    \item \label{asm:dnnfns-closed}
      $\rkhs{\gppriorcov} \cap \dnnfns$ is closed in $\gppathsp \supset \rkhs{\gppriorcov}$, and
    \item \label{asm:dnnfns-heine-borel}
      $\rkhs{\gppriorcov} \cap \dnnfns \subset \gppathsp$ has the Heine-Borel property, i.e., all closed and bounded subsets of $\rkhs{\gppriorcov} \cap \dnnfns$ are compact in $\gppathsp$.
  \end{enumerate*}
\end{assumption}
\Cref{asm:dnnfns-rkhs-closed-compact}\ref{asm:dnnfns-rkhs-subset} can be verified using a plethora of results from RKHS theory.
For instance, for Sobolev kernels like the Matérn family used in the experiments, the RKHS is norm-equivalent to a Sobolev space \citep[see, e.g.,][]{Kanagawa2018GaussianProcessesKernel}.
In this case, we only need the NN to be sufficiently often (weakly) differentiable on the interior of its compact domain $\insp$.
The closure property from \cref{asm:dnnfns-rkhs-closed-compact}\ref{asm:dnnfns-closed} is more difficult to verify directly.
However, if we assume that $\paramsp$ is compact and that the map $\paramsp \to \gppathsp, \params \mapsto \dnn(\cdot, \params)$ is continuous, then $\dnnfns$ is compact (and hence closed) as the image of a compact set under a continuous function.
This is a reasonable assumption, since, in practice, the weights of a neural network are represented as machine numbers with a maximal magnitude, meaning that $\paramsp$ is always contained in an $\ell_\infty$ ball of fixed radius.
Incidentally, compactness of $\dnnfns$ also entails the Heine-Borel property from \cref{asm:dnnfns-rkhs-closed-compact}\ref{asm:dnnfns-heine-borel}.
Alternatively, $\dnnfns$ also has the Heine-Borel property if it is a topological manifold (e.g., a Banach manifold), since it is necessarily finite-dimensional.

\subsection{Proofs}
\label{sec:proofs}
\begin{lemma}
  \label{lem:point-set-dist-lipschitz}
  Let $(\sX, d)$ be a metric space, $\sA \subseteq \sX$ nonempty, and
  \begin{equation*}
    d(\cdot, \sA) \colon \sX \to \R_{\ge 0},
    x \mapsto \inf_{a \in \sA} d(x, a).
  \end{equation*}
  Then $d(\cdot, \sA)$ is 1-Lipschitz.
\end{lemma}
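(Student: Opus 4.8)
The plan is to reduce everything to the triangle inequality. Fix two points $x, y \in \sX$; I want to bound $\abs{d(x, \sA) - d(y, \sA)}$ by $d(x, y)$, which is precisely the assertion that $d(\cdot, \sA)$ is $1$-Lipschitz.

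First I would exploit that the infimum defining $d(x, \sA)$ ranges over the same nonempty index set $\sA$ as the one defining $d(y, \sA)$. For an arbitrary $a \in \sA$, the triangle inequality gives $d(x, a) \le d(x, y) + d(y, a)$. Since $d(x, \sA) \le d(x, a)$ by definition of the infimum, I obtain $d(x, \sA) \le d(x, y) + d(y, a)$ for every $a \in \sA$. The right-hand side depends on $a$ only through the summand $d(y, a)$, so I may take the infimum over $a \in \sA$ on the right, yielding $d(x, \sA) \le d(x, y) + d(y, \sA)$, i.e., $d(x, \sA) - d(y, \sA) \le d(x, y)$.

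Next I would invoke symmetry: interchanging the roles of $x$ and $y$ in the previous step, together with $d(y, x) = d(x, y)$, gives $d(y, \sA) - d(x, \sA) \le d(x, y)$. Combining the two one-sided bounds yields $\abs{d(x, \sA) - d(y, \sA)} \le d(x, y)$, which is exactly the $1$-Lipschitz estimate.

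The only point requiring a moment's care---and the closest thing to an obstacle in an otherwise routine argument---is the order of the quantifiers when passing to the infimum: the inequality $d(x, \sA) \le d(x, y) + d(y, a)$ must be secured for each \emph{fixed} $a$ before taking $\inf_{a \in \sA}$ on the right. I would also note that the hypothesis $\sA \neq \emptyset$ guarantees each of these infima is a genuine nonnegative real number rather than $+\infty$, so the arithmetic manipulations are justified. No completeness or compactness of $\sX$, and no attainment of the infimum, is needed for this statement.
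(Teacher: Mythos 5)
Your proof is correct and follows essentially the same route as the paper's: the triangle inequality, passing to the infimum over $a \in \sA$, and then symmetry in the two points to obtain the two-sided bound. Your extra care about quantifier order (fixing $a$ before taking the infimum) makes explicit a step the paper's proof performs implicitly, but the argument is the same.
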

\begin{proof}
  For all $x_1, x_2 \in \sX$ we have
  \begin{align*}
    d(x_2, \sA)
    & = \inf_{a \in \sA} d(x_2, a) \\
    & \le d(x_2, x_1) + \inf_{a \in \sA} d(x_1, a) \\
    & = d(x_1, x_2) + d(x_1, \sA)
  \end{align*}
  by the triangle inequality and hence $d(x_2, \sA) - d(x_1, \sA) \le d(x_1, x_2)$.
  Since this argument is symmetric in $x_1$ and $x_2$, this also shows that
  \begin{equation*}
    -(d(x_2, \sA) - d(x_1, \sA))
    = d(x_1, \sA) - d(x_2, \sA)
    \le d(x_2, x_1)
    = d(x_1, x_2).
  \end{equation*}
  All in all, we obtain $\abs{d(x_2, \sA) - d(x_1, \sA)} \le d(x_1, x_2)$.
\end{proof}

\begin{lemma}
  \label{lem:point-set-dist-attained}
  Let $(\sX, d)$ be a metric space and $\sA \subseteq \sX$ a closed, nonempty subset with the Heine-Borel property.
  Then $\inf_{a \in \sA} d(x, a)$ is attained for all $x \in \sX$.
\end{lemma}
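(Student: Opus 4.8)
The plan is to apply the \emph{direct method}: construct a minimizing sequence in $\sA$, trap its tail inside a compact subset via the Heine-Borel property, extract a convergent subsequence, and pass to the limit using continuity of the distance. First I would fix $x \in \sX$ and set $\delta \defeq \inf_{a \in \sA} d(x, a)$, which is a well-defined nonnegative real number since $\sA$ is nonempty and $d \ge 0$. By the definition of the infimum, I would select a sequence $(a_n)_{n \in \N} \subseteq \sA$ with $d(x, a_n) \to \delta$ as $n \to \infty$; the goal is then to show that some accumulation point of this sequence attains the value $\delta$.

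The key step is to confine the sequence to a compact set so that Heine-Borel can be invoked. I would consider the sublevel set $K \defeq \set{a \in \sA \where d(x, a) \le \delta + 1}$. This set is bounded, since it is contained in the ball of radius $\delta + 1$ about $x$, and it is closed, being the intersection of the closed set $\sA$ with the closed sublevel set of the continuous map $a \mapsto d(x, a)$. Because $K$ is a closed and bounded subset of $\sA$, the Heine-Borel property of $\sA$ guarantees that $K$ is compact. Since $d(x, a_n) \to \delta$, all but finitely many terms $a_n$ satisfy $d(x, a_n) \le \delta + 1$, so after discarding an initial segment I may assume the entire sequence $(a_n)$ lies in $K$.

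Finally, compactness in a metric space is equivalent to sequential compactness, so $(a_n)$ admits a subsequence $(a_{n_k})$ converging to some $a^\star \in K \subseteq \sA$. The map $a \mapsto d(x, a)$ is $1$-Lipschitz (a direct consequence of the triangle inequality, as in \cref{lem:point-set-dist-lipschitz}), hence continuous, so $d(x, a^\star) = \lim_{k \to \infty} d(x, a_{n_k}) = \delta$. This exhibits a point $a^\star \in \sA$ at which the infimum $\inf_{a \in \sA} d(x, a)$ is attained, which is exactly the claim.

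I do not expect any substantial obstacle: this is the standard argument that a lower-semicontinuous function attains its infimum on a sequentially compact set. The only two points requiring a little care are (i) choosing the sublevel set $K$ so that it is simultaneously closed, bounded, and contained in $\sA$, which is precisely the hypothesis configuration under which the Heine-Borel property applies, and (ii) ensuring the subsequential limit lands back in $\sA$, which holds because $K \subseteq \sA$ is itself compact and therefore contains all its limit points. No delicate estimates are involved.
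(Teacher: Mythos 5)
Your proof is correct and takes essentially the same approach as the paper: both reduce the infimum to the intersection of $\sA$ with a closed ball around $x$ (your sublevel set $K$ is exactly $\sA \cap \clball{\delta + 1}{x}$), invoke the Heine-Borel property of $\sA$ to get compactness of that set, and conclude by continuity of $d(x, \cdot)$. The only difference is cosmetic: the paper cites the Weierstrass extreme value theorem as a black box, whereas you unpack it into the standard minimizing-sequence and sequential-compactness argument.
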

\begin{proof}
  Let $x \in \sX$ and $r > r_x \defeq \inf_{a \in \sA} d(x, a)$.
  Then $\sA \cap \clball{r}{x} \neq \emptyset$ as well as $d(x, a) > r$ for all $a \in \sA \setminus \clball{r}{x}$ and thus $\inf_{a \in \sA} d(x, a) = \inf_{a \in \sA \cap \clball{r}{x}} d(x, a)$.
  Moreover, $\sA \cap \clball{r}{x}$ is compact by the Heine-Borel property and $d(x, \,\cdot\,)$ is continuous.
  Hence, the claim follows from the Weierstrass extreme value theorem.
\end{proof}

\PropRelaxedObjectiveMAP*
\begin{proof}
  $d_{\gppathsp}(\,\cdot\,, \dnnfns)$ is (globally) 1-Lipschitz by \cref{lem:point-set-dist-lipschitz} and bounded from below by 0.
  Hence, $\potential^{\dstargets, \lambda} = \potential^{\dstargets} + \frac{1}{2 \lambda^2} d_{\gppathsp}^2(\,\cdot\,, \dnnfns)$ is continuous and for all $\eta > 0$, there is $K(\eta) \in \R$ such that
  \begin{equation*}
    \potential^{\dstargets, \lambda}(\gppath)
    \ge K(\eta) - \eta \norm{\gppath}_{\gppathsp}^2 + \underbrace{\frac{1}{2 \lambda^2} d_{\gppathsp}^2(\gppath, \dnnfns)}_{\ge 0}
    \ge K(\eta) - \eta \norm{\gppath}_{\gppathsp}^2
  \end{equation*}
  by \cref{asm:potential}.
  The statement then follows from Theorem 1.1 in \citet{Lambley2023StrongMAP}.
\end{proof}

\PropRelaxedObjectiveMAPConvergence*
Our proof makes use of ideas from \citet{Dashti2013MAPEstimators} and \citet{Lambley2023StrongMAP}.
\begin{proof}
  Without loss of generality, we assume $\gppriormean = \vec{0}$.
  By \cref{asm:potential}, there are constants $K, \alpha > 0$ such that
  \(
    \fsplobj(\gppath) \ge K + \alpha \norm{\gppath}[\rkhs{\gppriorcov}]^2
  \)
  for all $\gppath \in \rkhs{\gppriorcov}$ \citep[Section 4.1]{Lambley2023StrongMAP}.
  Now fix an arbitrary $\gppath \in \rkhs{\gppriorcov} \cap \dnnfns$.
  Then
  \begin{align}
    \fsplobj(\gppath)
    & = \fsplobj^{\lambda_n}(\gppath) \nonumber \\
    & \ge \fsplobj^{\lambda_n}(\gppath^\star_n) \nonumber \\
    & = \fsplobj(\gppath^\star_n) + \frac{1}{2 \lambda_n^2} d_{\gppathsp}^2(\gppath^\star_n, \dnnfns) \label{eq:relax-obj-lower-bound-fsplobj} \\
    & \ge K + \alpha \norm{\gppath^\star_n}[\rkhs{\gppriorcov}]^2 + \frac{1}{2 \lambda_n^2} d_{\gppathsp}^2(\gppath^\star_n, \dnnfns) \label{eq:relax-obj-lower-bound} \\
    & \ge K + \alpha \norm{\gppath^\star_n}[\rkhs{\gppriorcov}]^2, \nonumber
  \end{align}
  and hence
  \begin{equation*}
    \norm{\gppath^\star_n}[\rkhs{\gppriorcov}]^2
    \le \frac{1}{\alpha} \left( \fsplobj(\gppath) - K \right),
  \end{equation*}
  i.e. the sequence $\set{\gppath^\star_n}_{n \in \N} \subset \rkhs{\gppriorcov}$ is bounded.
  By the Banach-Alaoglu theorem \citep[Theorems V.3.1 and V.4.2(d)]{Conway1997FunctionalAnalysis} and the Eberlein-Šmulian theorem \citep[Theorem V.13.1]{Conway1997FunctionalAnalysis}, there is a weakly convergent subsequence $\set{\gppath^\star_{n_k}}_{k \in \N}$ with limit $\gppath^\star \in \rkhs{\gppriorcov}$.

  We need to show that $\gppath^\star \in \dnnfns$.
  From \cref{eq:relax-obj-lower-bound}, it follows that
  \begin{equation*}
    0
    \le d_{\gppathsp}(\gppath^\star_{n_k}, \dnnfns)
    \le \lambda_{n_k} \sqrt{2 (\fsplobj(\gppath) - K - \alpha \norm{\gppath^\star_{n_k}}[\rkhs{\gppriorcov}]^2)}
    \le \lambda_{n_k} \sqrt{2 (\fsplobj(\gppath) - K)},
  \end{equation*}
  where the right-hand side converges to 0 as $k \to \infty$.
  Hence, $\lim_{k \to \infty} d_{\gppathsp}(\gppath^\star_{n_k}, \dnnfns) = 0$.
  The embedding $\iota \colon \rkhs{\gppriorcov} \to \gppathsp$ is compact \citep[Corollary 3.2.4]{Bogachev1998GaussianMeasures} and, by \cref{lem:point-set-dist-lipschitz}, $d_{\gppathsp}(\,\cdot\,, \dnnfns) \colon \gppathsp \to \R$ is continuous, which implies that $d_{\gppathsp}(\evallinop{\iota}{\,\cdot\,}, \dnnfns) \colon \rkhs{\gppriorcov} \to \R$ is sequentially weakly continuous.
  Hence,
  \(
    d_{\gppathsp}(\gppath^\star, \dnnfns)
    = \lim_{k \to \infty} d_{\gppathsp}(\gppath^\star_{n_k}, \dnnfns)
    = 0
  \)
  and, by \cref{asm:dnnfns-rkhs-closed-compact} and \cref{lem:point-set-dist-attained}, it follows that $\gppath^\star \in \dnnfns$.

  Finally, we show that $\gppath^\star$ is a minimizer of $\fsplobj$ on $\rkhs{\gppriorcov} \cap \dnnfns$.
  Since $\potential^\dstargets \colon \gppathsp \to \R$ is continuous and $\iota$ is compact, $\potential^\dstargets \circ \iota \colon \rkhs{\gppriorcov} \to \R$ is sequentially weakly continuous.
  Moreover, we have $\limsup_{k \to \infty} \norm{\gppath^\star_{n_k}}[\rkhs{\gppriorcov}]^2 \ge \norm{\gppath^\star}[\rkhs{\gppriorcov}]^2$, since Hilbert norms are sequentially weakly lower-semicontinuous.
  Hence, $\limsup_{k \to \infty} \fsplobj(\gppath^\star_{n_k}) \ge \fsplobj(\gppath^\star)$.
  Now, by \cref{eq:relax-obj-lower-bound-fsplobj},
  \begin{equation*}
    \fsplobj(\gppath) \ge \fsplobj(\gppath^\star) + \limsup_{k \to \infty} \frac{1}{2 \lambda_{n_k}^2} d_{\gppathsp}^2(\gppath^\star_{n_k}, \dnnfns),
  \end{equation*}
  for all $\gppath \in \rkhs{\gppriorcov} \cap \dnnfns$.
  For $\gppath = \gppath^\star$ this implies that $\limsup_{k \to \infty} \frac{1}{2 \lambda_{n_k}^2} d_{\gppathsp}^2(\gppath^\star_{n_k}, \dnnfns) = 0$.
  All in all, we arrive at $\fsplobj(\gppath) \ge \fsplobj(\gppath^\star)$ for all $\gppath \in \rkhs{\gppriorcov} \cap \dnnfns$, i.e. $\gppath^\star$ is a minimizer of $\fsplobj$.
\end{proof}

\section{Experimental setup}
\label{sec:experimental_setup}

\subsection{Qualitative experiments with synthetic data}
\label{sec:experimental_setup_qualitative}

\paragraph{Regression.}

We sample points from the corresponding generative model
\begin{equation}
    y_i = \operatorname{sin}(2\pi x_i) + \epsilon \quad \text{with} \quad \epsilon \sim \gaussian{0}{\sigma_n^2}
\end{equation}
using $\sigma_n = 0.1$ and draw $x_i \sim \mathcal{U}([-1, -0.5] \cup [0.5, 1])$.
We plot data points as gray circles, functions sampled from the approximate posterior as green lines, the empirical mean function as a red line and its empirical 2-standard-deviation interval around the mean as a green surface.
All neural networks have the same two hidden-layer architecture with $50$ neurons per layer and hyperbolic tangent ($\tanh$) activation functions.
For \fsplaplace, we use $100$ context points placed on a regular grid and run a maximum of $500$ Lanczos iterations.
For FVI, we sample $100$ context points drawn from $\mathcal{U}([-2, 2])$ at each update.
Except when stated otherwise, we consider a centered GP prior and find the parameters of the covariance function by maximizing the log-marginal likelihood \citep{rasmussen2006GP}.
For the Laplace, we use the full generalized Gauss-Newton matrix and an isotropic Gaussian prior with scale $\sigma_p=1$.
The MAP estimate uses the same prior.
We find the parameters of the Gaussian process priors by maximizing the log-marginal likelihood and the parameters of the sparse GP by maximizing the evidence lower bound \citep{rasmussen2006GP}.

\paragraph{Classification.}

We sample randomly perturbed data points from the two moons data \citep{scikit-learn} with noise level $\sigma_n = 0.1$.
We plot the the data points from class 0 as red dots and those from class 1 as blue dots.
We show the mean (upper row) and 2-standard-deviation (bottom row) of the probability that a sample $\vx$ belongs to class 1 under the approximate posterior, which we estimate using $K=100$ samples.
We consider a two hidden-layer neural network with $100$ neurons per layer and hyperbolic tangent activation functions.
For \fsplaplace, we use $100$ context points placed on a regular grid over $[-3.75, 3.75] \times [-3.75, 3.75]$ and limit Lanczos to run for at most $500$ iterations.
For FVI, we sample $100$ context points from $\mathcal{U}([-3.75, 3.75]^2)$ at each update.
We consider a centered GP prior and find the parameters of the covariance function by maximizing the log-marginal likelihood \citep{rasmussen2006GP} using the reparameterization of classifications labels into regression targets from \citet{milios2018DirGP}.
For the Laplace, we use the full generalized Gauss-Newton matrix and an isotropic Gaussian prior with scale $\sigma_p=1$.
The MAP estimate uses the same prior.
For the Gaussian process, we Laplace-approximate the intractable GP posterior and find the prior parameters by maximizing the log-marginal likelihood \citep{rasmussen2006GP}.
Sparse GP parameters are found by maximizing the ELBO \citep{rasmussen2006GP}.

\subsection{Quantitative experiments with real-world data}
\label{sec:experimental_setup_quantitative}

\paragraph{Mauna Loa.}

We consider the Mauna Loa dataset which tracks the monthly average atmospheric $\text{CO}_2$ concentration at the Mauna Loa observatory in Hawaii from 1974 to 2024 \citep{keeling2000mcdm}.
We consider the first $70\%$ of the data in chronological order as the train set (from 1974 to 2005) and the last $30\%$ as the test set (from 2009 to 2024).
We standardize the features (time) and regression targets ($\text{CO}_2$ concentration).
We use two hidden-layer neural networks with hyperbolic tangent activations and $50$ units each.
We augment the input of the neural networks with an additional sine and cosine transformation of the features i.e., we use the feature vectors $(t_i, \sin (2\pi t_i / T), \cos (2\pi t_i / T))$ where $t_i$ is the time index and $T$ is the period used in \citet{rasmussen2006GP}.
For \fsplaplace, we use $100$ context points placed on a regular grid and limit Lanczos to run at most $500$ iterations.
For FVI, we draw uniformly $100$ context points at each update.
For Laplace, we use the full generalized Gauss-Newton matrix and find the prior scale by maximizing the marginal likelihood \citep{immer2021scalable}.

\paragraph{Ocean current modeling.}

We consider the GulfDrifters dataset \citep{lilly2021gulfdrifters} and we follow the setup by \citet{shalashilin2024GPocean}.
We use as training data $20$ simulated velocity measurements (red arrows in \cref{fig:ocean_current}), and consider as testing data $544$ average velocity measurements (blue arrows in \cref{fig:ocean_current}) computed over a regular grid on the $[-90.8,-83.8] \times [24.0,27.5]$ longitude-latitude interval. We standardize both features and regression targets.
We incorporate physical properties of ocean currents into the models by applying the Helmholtz decomposition to the GP prior's covariance function \citep{berlinghieri2023gaussian} as well as to the neural network $f$ using the following parameterization
\begin{equation}
    f(\cdot, \vw) = \operatorname{grad} \Phi(\cdot, \vw_1) + \operatorname{rot} \Psi(\cdot, \vw_2)
\end{equation}
where $\vw = \set{\vw_1, \vw_2}$ and $\Phi(\cdot, \vw_1)$ and $\Psi(\cdot, \vw_2)$ are two hidden-layer fully-connected neural networks with hyperbolic tangent activation functions and $100$ hidden units per layer.
We use $96$ context points placed on a regular grid for the \fsplaplace and limit Lanczos to run at most $500$ iterations.
For FVI, we use the same $96$ context points.
For the linearized Laplace, we use the full generalized Gauss-Newton (GGN) and fix the prior scale to $\sigma_p = 1$.

\paragraph{Image classification.}

We consider the MNIST \citep{lecun2010mnist} and FashionMNIST \citep{xiao2017fmnist} image classification datasets.
We standardizing the images, fit the models on a random partition of $90\%$ of the provided train splits, keeping the remaining $10\%$ as validation data, and evaluate the models on the test data.
We report mean and standard-errors across $5$ such random partitions of the train data with different random seeds.
We compare our model to FVI, Laplace, MAP, and Sparse GP baselines, as the scale of the datasets forbids exact GP inference.
The expected log-likelihood and expected calibration error are estimated by Monte Carlo integration with $10$ posterior samples.
All neural networks have the same convolutional neural network architecture with three convolutional layers ($3\times3$ kernels and output channels $16$, $32$ and $64$) interleaved with a max-pooling layer, before two fully connected layers (with output size $128$ and $10$).
For \fsplaplace, we sample context points from the Kuzushiji-MNIST (KMNIST) dataset \citep{clanuwat2018kmnist} of $28 \times 28$ gray-scale images during training following \citet{rudner2023functionspace} and use $25'000$ points from the Halton low discrepancy sequence \citep{halton1960qmc} to compute the covariance.
We also consider sample context points uniformly over the range $[p_{min}^{h,w,c}, p_{max}^{h,w,c}]_{h=1,w=1,c=1}^{H,W,C}$, where $H$, $W$ and $C$ are respectively the height, width and number of channels of the images, and $p_{min}^{h,w,c}=v_{min}^{h,w,c}-0.5 \times \Delta^{h,w,c}$ and $p_{max}^{h,w,c}=v_{max}^{h,w,c}+0.5 \times \Delta^{h,w,c}$ where $\Delta^{h,w,c}=v_{max}^{h,w,c}-v_{min}^{h,w,c}$ is the difference between the minimal ($v_{min}^{h,w,c}$) and maximal ($v_{max}^{h,w,c}$) values of the data set at pixel index $(h,w,c)$.
For FVI, we use the same context point distributions as \fsplaplace.
For the Laplace, we use the K-FAC approximation of the generalized Gauss-Newton matrix.
We use a Categorical likelihood, the Adam optimizer \citep{kingma2017adam} with a batch size of $100$ and stop training early when the validation loss stops decreasing.
We optimize hyper-parameters using the Bayesian optimization tool provided by Weights and Biases \citep{wandb} and select the parameters which maximize the average validation expected log-likelihood across $1$ random partitioning of the provided training split into training and validation data.
We find covariance function parameters by maximizing the log-marginal likelihood from batches \citep{chen2021gaussian} using the reparameterization of classifications labels into regression targets from \citet{milios2018DirGP}.
We optimize over kernel, prior scale, learning-rate, $\alpha_\epsilon$ (introduced by \citet{milios2018DirGP}) and activation function and select covariance functions among the RBF, Matern-1/2, Matern-3/2, Matern-5/2 and Rational Quadratic.

\paragraph{Out-of-distribution detection with image data.}

We consider out-of-distribution detection with image data and a Categorical likelihood following the setup by \citet{osawa2019practical}. We aim to partition in-distribution (ID) data from out-of-distribution (OOD) based on the mean of entropy of the predictive distribution with respect to $10$ posterior samples using a single threshold found by fitting a decision stump.
We evaluate a model fit on MNIST using its test data set as in-distribution data (ID) and the test data set of FashionMNIST as out-of-distribution (OOD), and vice-versa for a model fit on FashionMNIST.
We use the same models and hyper-parameters as for image classification and report mean and standard-error of our scores across the same $5$ random partitions of the data.

\paragraph{Bayesian optimization.}

We consider Bayesian optimization (BO) problems derived from \citet{li2024BoBNN}.
More specifically, we use the same setup but change the dimension of the feature space of the tasks.
We report mean and standard error of 5 repetitions of the tasks across different random seeds.
We use two hidden layer neural networks with hyperbolic tangent activations and $50$ hidden units each.
\fsplaplace uses a Matern-5/2 covariance function with constant zero mean function whose parameters are found by maximizing the marginal likelihood \citep{rasmussen2006GP}.
We use $400$ context points during training and $10'000$ to compute the posterior covariance sampled using latin hypercube sampling \citep{latinhypercubesampling}.
We use the same prior for FVI and the same number of context points during training.
The Gaussian process uses a zero mean function and a Matern-5/2 covariance function following \citet{li2024BoBNN}.
For the Laplace approximation, we find the prior scale by maximizing the marginal likelihood \citep{immer2021scalable}.
Unlike \fsplaplace which uses low-rank factors to parameterize the posterior covariance, we found that repeatedly computing the covariance and predictive posterior of the linearized Laplace with the full and K-FAC generalized Gauss-Newton (GGN) matrices was often prohibitively slow in the BO setup.
We therefore use the K-FAC approximation to the GGN where possible (Branin and PDE) and the diagonal approximation otherwise (Ackley, Hartmann, Polynomial and BNN).
We implemented this experiment using the BO Torch library \citep{balandat2020botorch}.

\paragraph{Regression with UCI datasets.}

We consider tabular regression datasets from the UCI repository \citep{dua2019UCI}.
Specifically, we perform leave-one-out 5-fold cross validation, considering $10\%$ of the training folds as validation data, and we report the mean and standard-error of the average expected log-likelihood on the test fold.
We report the mean rank of the methods across all datasets by assigning rank 1 to the best scoring method as well as any method who's standard error overlaps with the highest score's error bars, and recursively apply this procedure to the methods not having yet been assigned a rank.
We estimate the expected log-likelihood using $10$ posterior samples.
We encoding categorical features as one-hot vectors and standardizing the features and labels.
We consider two hidden-layer neural networks with $50$ hidden units each and hyperbolic tangent activations.
All models have a homoskedastic noise model with a learned scale parameter.
\fsplaplace uses context points drawn uniformly over the range $[p_{min}^{i}, p_{max}^{i}]_{i=1}^{d}$, where $d$ is the dimension of the feature space, and $p_{min}^{i}=v_{min}^{i}-0.5 \times \Delta^{i}$ and $p_{max}^{i}=v_{max}^{i}+0.5 \times \Delta^{i}$ where $\Delta^{i}=v_{max}^{i}-v_{min}^{i}$ is the difference between the minimal ($v_{min}^{i}$) and maximal ($v_{max}^{i}$) values of the data set at feature index $i$.
For the Laplace, we use the full generalized Gauss-Newton matrix.
FVI uses the same context points as \fsplaplace.
Neural networks are fit using the Adam optimizer \citep{kingma2017adam} and we stop training early when the validation loss stops decreasing.
Hyper-parameters are found just as for the image classification experiment.

\paragraph{Out-of-distribution detection with regression data.}

We consider out-of-distribution (OOD) detection with tabular regression data from the UCI datasets \citep{dua2019UCI} following the setup from \citet{malinin2021uncertainty}.
We aim to separate test data (in-distribution) from a subset of the song dataset \citep{bertin2011song} (out-of-distribution) with the same number of features based on the variance of the predictive posterior estimated from $10$ posterior samples using a single threshold obtained via a decision stump.
We process the data just like in the regression experiments, use the same model hyper-parameters and report mean and standard-error of the scores across the same $5$ random partitions of the data.

\subsubsection{Software}

We use the JAX \citep{jax2018github} and DM-Haiku \citep{haiku2020github} Python libraries to implement neural networks.
The generalized Gauss-Newton matrices used in the Laplace approximations are computed using the KFAC-JAX library \citep{kfacjax2022github}.
We implemented the GPs and sparse GPs using the GPyTorch library \citep{balandat2020botorch}.
We conducted experiments the Bayesian optimization experiments using the BOTorch library \citep{balandat2020botorch}.

\subsubsection{Hardware}

All models were fit using a single NVIDIA RTX 2080Ti GPU with 11GB of memory.

\section{Additional experimental results}

\subsection{Additional qualitative results}
\label{sec:addition_qualitative_eval}

\paragraph{Regression.}

We show additional results for the synthetic regression task described in \cref{sec:experiments}. We find that \fsplaplace successfully adapts to the beliefs specified by different Gaussian process priors in terms of periodicity (\cref{fig:flaplace_periodic}), smoothness (\cref{fig:prior_varying_prior}) and length scale (\cref{fig:rbf_varying_lengscale}) without modifying the neural network's architecture or adding features.
We also find that our method effectively regularizes the model when the data is very noisy (\cref{fig:varying_label_noise}).

\begin{figure*}[h!]
    \centering
    \resizebox{\linewidth}{!}{
    \includegraphics[width=\linewidth]{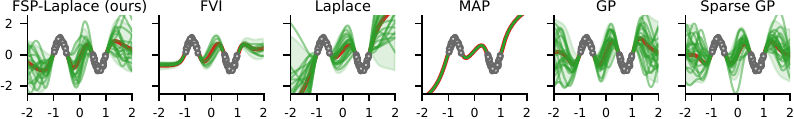}
    }
    \caption{Just like the Gaussian process (GP) and sparse GP, \fsplaplace captures the smoothness behavior specified by the RBF covariance function of the Gaussian process prior.}
    \label{fig:flaplace_RBF_vs_baselines}
\end{figure*}

\begin{figure*}[h!]
    \centering
    \resizebox{\linewidth}{!}{
    \includegraphics[width=\linewidth]{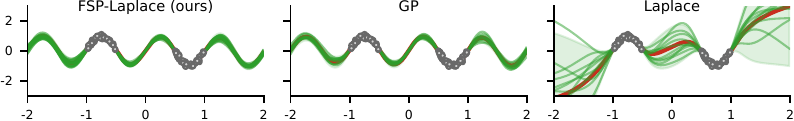}
    }
    \caption{Unlike the linearized Laplace, \fsplaplace allows to incorporate periodicity within the support of the data using a periodic prior covariance function and without additional periodic features.}
    \label{fig:flaplace_periodic}
\end{figure*}

\begin{figure*}[h!]
    \centering
    \resizebox{\linewidth}{!}{
    \includegraphics[width=\linewidth]{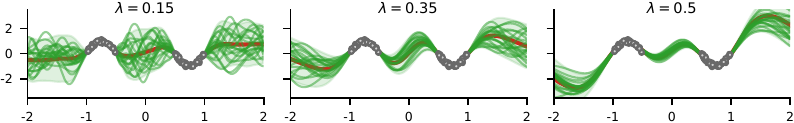}
    }
    \caption{\fsplaplace adapts to the length scale provided by the RBF covariance function of the Gaussian process prior.}
    \label{fig:rbf_varying_lengscale}
\end{figure*}

\begin{figure*}[h!]
    \centering
    \resizebox{\linewidth}{!}{
    \includegraphics[width=\linewidth]{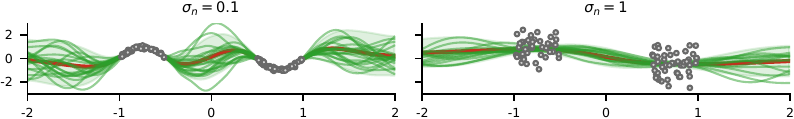}
    }
    \caption{\fsplaplace is effectively regularized under strong label noise.}
    \label{fig:varying_label_noise}
\end{figure*}

\paragraph{Classification.}

We show additional results for the two-moons classification task described in \cref{sec:experiments}. Similar to the GP and sparse GP baselines, we find that our method captures the behavior of the prior, showing a smooth decision boundary when equipped with a RBF covariance function (\cref{fig:classification_rbf}) and a rough decision boundary when equipped with a Matern-1/2 covariance function (\cref{fig:classification_matern12}). \fsplaplace also reverts to the zero-valued mean outside of the data support. 

\begin{figure*}
    \centering
    \resizebox{\linewidth}{!}{
    \includegraphics[width=\linewidth,height=2in]{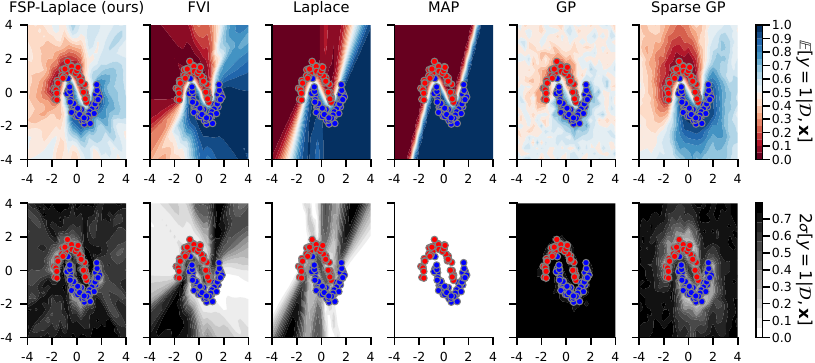}
    }
    \caption{\fsplaplace with a Matern-1/2 covariance function against baselines in the two-moons classification task. Similar to the Gaussian process (GP) and sparse GP, our method shows a rough decision boundary.}
    \label{fig:classification_matern12}
\end{figure*}

\begin{figure*}
    \centering
    \resizebox{\linewidth}{!}{
    \includegraphics[width=\linewidth,height=2in]{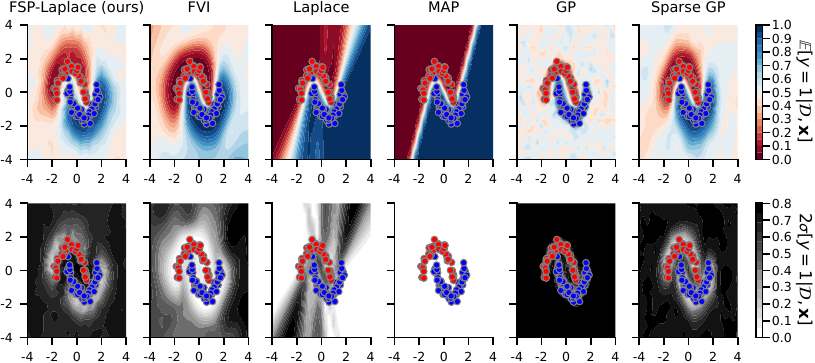}
    }
    \caption{\fsplaplace with a RBF covariance function against baselines in the two-moons classification task. Similar to the Gaussian process (GP) and sparse GP, our method shows a smooth decision boundary.}
    \label{fig:classification_rbf}
\end{figure*}

\paragraph{Effect of context points.}

We provide additional details on the role of the context points in \fsplaplace.
The goal of the context points is to regularize the neural network on a finite set of input locations which includes any point where we would like to evaluate the model. 
During MAP estimation (see \cref{alg:fsp-laplace-training}), context points are resampled at each update step to amortize the coverage of the feature space. 
During the posterior covariance computation (see \cref{alg:fsp-laplace}), context points are fixed and define where we regularize the model. 
Context points bare similarity with inducing points in variational GPs \citep{hensman2013gaussian} in this later step as both define where the model is regularized.

We show additional results demonstrating the behavior of our model in the low context point regime.
\cref{fig:varying_ctxt_points_regression} shows the effect of the number of context points on a 1-dimensional regression task with GP priors equipped with a RBF and a Matern 1/2 kernel. \cref{fig:varying_ctxt_points_classification} shows the same experiment but on a 2-dimensional classification task. The $M$ context points are randomly sampled uniformly during training, and we use the Halton low discrepancy sequence as context points to compute the covariance.
Even with a very small number of context points ($M=3$ and $M=5$), our model still produces useful uncertainty estimates even if it cannot accurately capture the beliefs specified by the prior. We also note that our method requires more context points to capture the beliefs of rougher priors than smooth priors (see \cref{fig:varying_ctxt_points_regression}).
\begin{figure*}
    \centering
    \resizebox{\linewidth}{!}{
    \includegraphics[width=\linewidth]{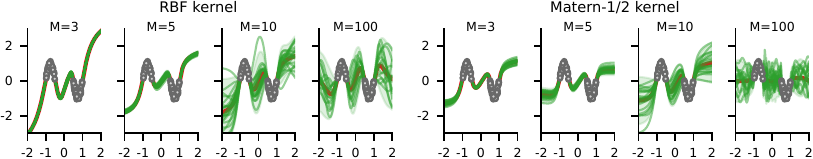}
    }
    \caption{\fsplaplace with a smooth RBF covariance function and a rough Matern-1/2 with varying amounts of context points $M$. Given a very small number of context points, our method still produces useful uncertainty estimates.}
    \label{fig:varying_ctxt_points_regression}
\end{figure*}

\begin{figure*}
    \centering
    \resizebox{\linewidth}{!}{
    \includegraphics[width=\linewidth]{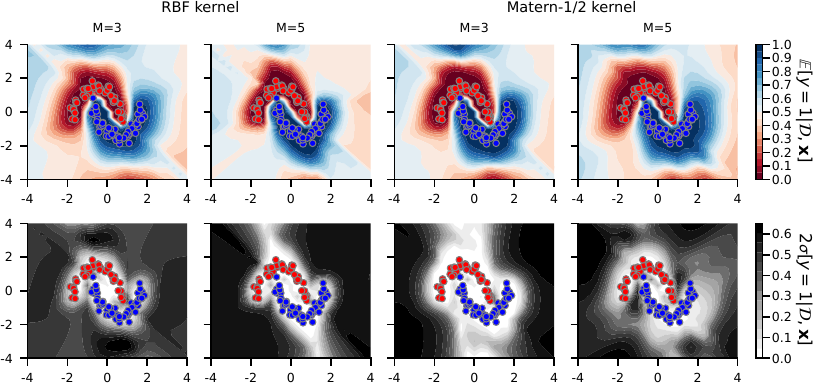}
    }
    \caption{\fsplaplace with a smooth RBF covariance function and a rough Matern-1/2 with varying amounts of context points $M$. Given a very small number of context points, our method still produces useful uncertainty estimates and reverts to the prior's zero valued mean.}
    \label{fig:varying_ctxt_points_classification}
\end{figure*}
\subsection{Additional quantitative results}
\label{sec:additional_quantitative_results}

\paragraph{Mauna Loa.}

We here show the figure associated with the Mauna Loa dataset experiment in \cref{sec:quantitative_eval}. \cref{fig:mauna_loa} shows the predictions of \fsplaplace and the baselines on the Mauna Loa dataset.
We find that incorporating prior beliefs both via an informative prior and periodic features in \fsplaplace results in an improved fit over FVI, Laplace and GP baselines. 

\begin{figure*}
    \centering
    \resizebox{\linewidth}{!}{
    \includegraphics[width=\linewidth]{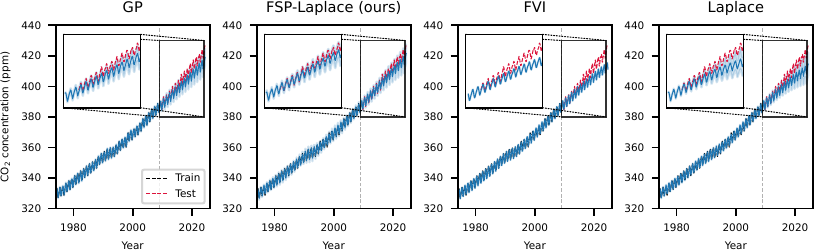}
    }
    \caption{Regression on the Mauna Loa dataset. Incorporating prior knowledge via a kernel tailored specifically to the dataset, our method (\fsplaplace) results in a strong decrease in mean square error over baselines.}
    \label{fig:mauna_loa}
\end{figure*}

\paragraph{Out-of-distribution detection with image data.}

We present additional results for the out-of-distribution detection experiment presented in \cref{sec:quantitative_eval}. \cref{fig:ood_detection} shows the distribution of the predictive entropy of in-distribution (ID) and out-of-distribution (OOD) data under the sparse GP, \fsplaplace, FVI and Laplace models. 
For both MNIST and FashionMNIST, we find that the predictive entropy of ID data under \fsplaplace is tightly peaked around $0$ nats and that the predictive entropy of OOD data strongly concentrates around its maximum $\ln 10 \approx 2.3$ nats.

\begin{figure*}
    \centering
    \resizebox{\linewidth}{!}{
    \includegraphics[width=\linewidth, height=2in]{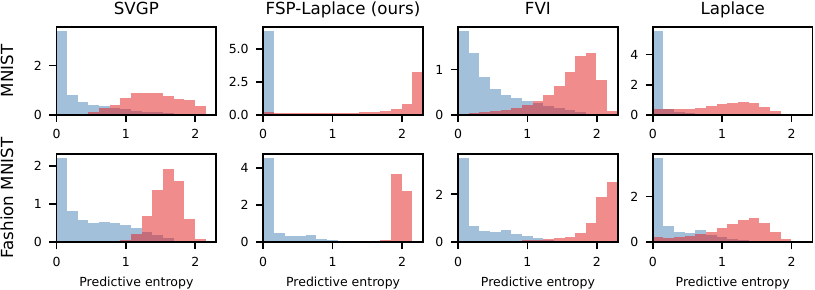}
    }
    \caption{Distribution of in-distribution (ID) and out-of-distribution (OOD) samples for the MNIST and Fashion MNIST image datasets. The predictive entropy produced by \fsplaplace nearly perfectly partition ID and OOD data. This is reflected in OOD accuracy in \cref{tab:classification_results}.}
    \label{fig:ood_detection}
\end{figure*}

\paragraph{Rotated MNIST and FashionMNIST.}

We provide an additional experiment studying the behavior of \fsplaplace under out-of-distribution data. 
We consider the setup by \citet{rudner2023tractable,sensoy2018evidentialdl} and track the predictive entropy of models trained on MNIST and FashionMNIST for increasing angles of rotation of the test images.
We expect the predictive entropy of a well-calibrated neural network to grow as the inputs become increasingly dissimilar to the training data with higher angles of rotation.
Similar to FVI, sparse GP and the linearized Laplace baselines, we find that \fsplaplace yields low predictive entropy for small rotation angles and that the predictive entropy increases with the angle on both MNIST and FashionMNIST which is what we expect from a well calibrated Bayesian model.
\begin{figure*}
    \centering
    \resizebox{\linewidth}{!}{
    \includegraphics[width=\linewidth, height=1.5in]{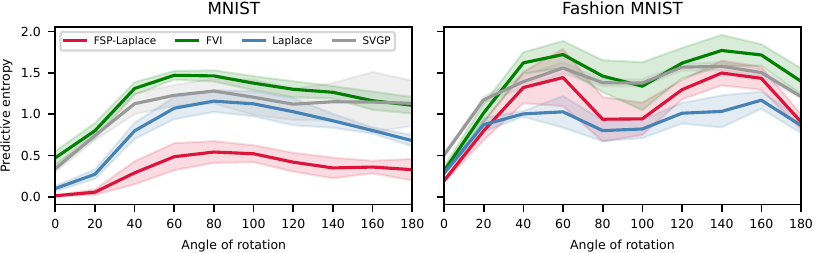}
    }
    \caption{Input distribution shift experiment. We find that \fsplaplace shows greater predictive entropy as the input becomes increasingly dissimilar to the training data.}
    \label{fig:distribution_shift}
\end{figure*}

\paragraph{Regression with UCI datasets.}

We further evaluate our method on regression datasets from the UCI repository \citep{dua2019UCI} and compare \fsplaplace to FVI, Laplace, MAP, GP, and Sparse GP baselines.
We perform leave-one-out 5-fold cross-validation, keeping $20\%$ of the remaining train folds as validation data. 
We report the mean and standard-error of the expected log-likelihood with respect to samples from the posterior across the 5-folds.
Additional details can be found in \cref{sec:experimental_setup_quantitative}. 
Results are presented in \cref{tab:regression_results}.
We find that our method performs well compared to baselines, matching or improving over the mean rank of all Bayesian methods (FVI, Laplace, GP, and Sparse GP) but a slightly lower mean rank than the MAP baseline ($1.636$ vs. $1.363$). 
In particular, our method is noticeably more accurate than the Laplace. 

\begin{table*}[h]
\scshape
\caption{Test log-likelihood and out-of-distribution accuracy of evaluated methods on regression datasets from the UCI repository. We find that our method matches or improves over the mean rank of Bayesian baselines in terms of expected log-likelihood, and obtains out-of-distribution detection accuracies similar to the best performing BNN.}
\label{tab:regression_results}
\small 
\resizebox{\linewidth}{!}{
\begin{tabular}{@{}lccccccccccccc@{}}
\toprule
Dataset & \multicolumn{6}{c}{Expected log-likelihood ($\uparrow$)} & \multicolumn{5}{c}{Out-of-distribution detection accuracy ($\uparrow$)} \\
\cmidrule(r){2-7} \cmidrule(l){8-12}
 & \fsplaplace (ours) & FVI & Laplace & MAP & GP & Sparse GP & \fsplaplace (ours) & FVI & Laplace  & GP & Sparse GP \\
\cmidrule(r){0-6} \cmidrule(l){8-12}
Boston & \textbf{-0.641 $\pm$ 0.084} & \textbf{-0.512 $\pm$ 0.105} & -0.733 $\pm$ 0.019 & -0.906 $\pm$ 0.274 & -1.136 $\pm$ 0.035 & -0.762 $\pm$ 0.077 & 
0.923 $\pm$ 0.015 & 0.754 $\pm$ 0.019 & 0.913 $\pm$ 0.008  & 0.948 $\pm$ 0.004 & \textbf{0.969 $\pm$ 0.003} \\
Concrete & -0.352 $\pm$ 0.052 & -0.378 $\pm$ 0.076 & -0.538 $\pm$ 0.026 & \textbf{-0.199 $\pm$ 0.056} & -0.766 $\pm$ 0.119 & -0.609 $\pm$ 0.050 & 
\textbf{0.914 $\pm$ 0.007} & 0.585 $\pm$ 0.011 & 0.859 $\pm$ 0.002  & 0.878 $\pm$ 0.005 & 0.854 $\pm$ 0.007 \\
Energy & \hphantom{-}1.403 $\pm$ 0.033 & \hphantom{-}1.448 $\pm$ 0.059 & \hphantom{-}1.423 $\pm$ 0.062 & \textbf{\hphantom{-}1.754 $\pm$ 0.058} & \hphantom{-}1.370 $\pm$ 0.060 & \hphantom{-}1.518 $\pm$ 0.032 & 
\textbf{1.000 $\pm$ 0.000} & 0.928 $\pm$ 0.007 & \textbf{1.000 $\pm$ 0.000}  & \textbf{1.000 $\pm$ 0.000} & \textbf{1.000 $\pm$ 0.000} \\
kin8nm & -0.134 $\pm$ 0.010 & \textbf{-0.135 $\pm$ 0.021} & -0.199 $\pm$ 0.009 & \textbf{-0.105 $\pm$ 0.011} & - & -0.206 $\pm$ 0.003 & 
0.626 $\pm$ 0.006 & 0.593 $\pm$ 0.006 & 0.609 $\pm$ 0.015  & - & \textbf{0.645 $\pm$ 0.004} \\
Naval & \textbf{\hphantom{-}3.482 $\pm$ 0.014} & \hphantom{-}2.943 $\pm$ 0.062 & \hphantom{-}3.213 $\pm$ 0.028 & \textbf{\hphantom{-}3.460 $\pm$ 0.048} & - & \textbf{\hphantom{-}3.474 $\pm$ 0.019} & 
\textbf{1.000 $\pm$ 0.000} & \textbf{1.000 $\pm$ 0.000} & \textbf{1.000 $\pm$ 0.000}  & - & \textbf{1.000 $\pm$ 0.000} \\
Power & \textbf{\hphantom{-}0.045 $\pm$ 0.014} & \textbf{\hphantom{-}0.071 $\pm$ 0.018} & \hphantom{-}0.016 $\pm$ 0.011 & \textbf{\hphantom{-}0.047 $\pm$ 0.013} & - & \hphantom{-}0.026 $\pm$ 0.009 & 
\textbf{0.818 $\pm$ 0.003} & 0.686 $\pm$ 0.014 & \textbf{0.820 $\pm$ 0.003}  & - & 0.777 $\pm$ 0.004 \\
Protein & \textbf{-0.991 $\pm$ 0.005} & \textbf{-0.988 $\pm$ 0.002} & -1.043 $\pm$ 0.006 & -1.003 $\pm$ 0.003 & - & -1.035 $\pm$ 0.002 & 
0.917 $\pm$ 0.004 & 0.857 $\pm$ 0.015 & \textbf{0.992 $\pm$ 0.001}  & - & 0.967 $\pm$ 0.001 \\
Wine & \textbf{-1.189 $\pm$ 0.019} & \textbf{-1.193 $\pm$ 0.015} & -1.403 $\pm$ 0.014 & \textbf{-1.174 $\pm$ 0.022} & -1.313 $\pm$ 0.032 & -1.194 $\pm$ 0.020 & 
0.591 $\pm$ 0.024 & 0.541 $\pm$ 0.012 & 0.763 $\pm$ 0.012  & \textbf{0.812 $\pm$ 0.008} & 0.680 $\pm$ 0.014 \\
Yacht & \hphantom{-}0.183 $\pm$ 0.438 & \hphantom{-}0.655 $\pm$ 0.293 & \hphantom{-}0.136 $\pm$ 0.762 & \textbf{\hphantom{-}1.644 $\pm$ 0.310} & \textbf{\hphantom{-}1.336 $\pm$ 0.060} & \textbf{\hphantom{-}1.532 $\pm$ 0.139} & 
\textbf{0.973 $\pm$ 0.008} & 0.594 $\pm$ 0.010 & 0.851 $\pm$ 0.029  & 0.842 $\pm$ 0.023 & 
0.651 $\pm$ 0.011 \\
Wave & \hphantom{-}6.112 $\pm$ 0.049 & \hphantom{-}6.552 $\pm$ 0.155 & \hphantom{-}6.460 $\pm$ 0.025 & \textbf{\hphantom{-}7.146 $\pm$ 0.195} & - & \hphantom{-}4.909 $\pm$ 0.001 & 
\textbf{0.912 $\pm$ 0.076} & \textbf{0.861 $\pm$ 0.014} & 0.810 $\pm$ 0.023  & - & 0.513 $\pm$ 0.001 \\
Denmark & \textbf{-0.364 $\pm$ 0.008} & -0.462 $\pm$ 0.006 & -0.810 $\pm$ 0.010 & -0.693 $\pm$ 0.014 & - & -0.768 $\pm$ 0.001 &
0.677 $\pm$ 0.004 & 0.634 $\pm$ 0.008 & \textbf{0.714 $\pm$ 0.006}  & - & 0.626 $\pm$ 0.002 \\
\cmidrule(r){0-6} \cmidrule(l){8-12}
Mean rank & 1.636 & 1.636 & 2.727 & 1.363 & 2.600 & 2.455 & 1.818 & 3.091 & 1.727 & 1.6 & 2.182 \\
\bottomrule
\end{tabular}
}
\end{table*}

\paragraph{Out-of-distribution detection on tabular data.}

We also investigate whether the epistemic uncertainty of \fsplaplace is predictive of out-of-distribution data in the regression setting by evaluating it on out-of-distribution detection following \citet{malinin2021uncertainty}.
We report the accuracy of a single threshold to classify OOD from in-distribution (ID) data based on the predictive uncertainty.
More details are presented in \cref{sec:experimental_setup_quantitative}. 
In the context of tabular data, we find that \fsplaplace performs second best among BNNs and is almost as accurate as the Laplace approximation (see \cref{tab:regression_results}) which is first.
We note that \fsplaplace systematically outperforms FVI in terms of out-of-distribution detection and obtains a higher mean rank ($1.818$ vs. $3.091$).

\paragraph{$\norm{P_0 \Lambda P_0^\top}_F$ is negligible.}

We provide evidence that the term $\norm{P_0 \Lambda P_0^\top}_F$ in \cref{sec:fsplaplace} is negligible compared to $\norm{\Lambda}_F$ in four different configurations. We consider the synthetic regression and classification setups described in \cref{sec:experimental_setup_qualitative}.

\begin{table*}[h]
\scshape
\centering
\caption{$\norm{P_0 \Lambda P_0^\top}_F / \norm{\Lambda}_F$ for different combinations of priors and tasks.}
\label{tab:null_space_is_neglegible}
\small 
\resizebox{0.4\linewidth}{!}{
\begin{tabular}{@{}lcc@{}}
\toprule
Kernel & Regression & Classification \\
\midrule
RBF & $1.026 \times 10^{-7}$ & $3.548 \times 10^{-4}$ \\
Matern-$1/2$ & $4.305 \times 10^{-6}$ & $1.299 \times 10^{-2}$ \\
\bottomrule
\end{tabular}
}
\end{table*}

\end{document}